\theoremstyle{plain}
\newcommand{\dmi}{$(\D, \M, \I)$ }
\definecolor{darkred_f}{RGB}{182, 85, 85}
\definecolor{darkblue_f}{RGB}{86, 116, 172}
\definecolor{darkorange_f}{RGB}{209, 136, 92}
\definecolor{darkgreen_f}{RGB}{106, 165, 110}
\definecolor{plot_blue}{RGB}{66, 153, 225}
\definecolor{plot_orange}{RGB}{237, 137, 54} 
\definecolor{plot_red}{RGB}{245, 101, 101}
\definecolor{plot_green}{RGB}{72, 187, 120}
\definecolor{plot_purple}{RGB}{159, 122, 234}
\definecolor{plot_green2}{RGB}{56, 178, 172}
\definecolor{plot_pink}{RGB}{237, 100, 166}
\definecolor{goldenyellow}{rgb}{1.0, 0.87, 0.0}
\def\Eqref#1{Equation~\ref{#1}}
\def\eqref#1{\Eqref{#1}}
\def\1{\bm{1}}
\DeclareMathAlphabet{\mathsfit}{\encodingdefault}{\sfdefault}{m}{sl}
\SetMathAlphabet{\mathsfit}{bold}{\encodingdefault}{\sfdefault}{bx}{n}
\def\gG{{\mathcal{G}}}
\def\gX{{\mathcal{X}}}
\newcommand{\E}{\mathbb{E}}
\newcommand{\R}{\mathbb{R}}
\newtheorem{theorem}{Theorem}[section]
\newtheorem{corollary}{Corollary}
\newtheorem{definition}{Definition}
\newcommand{\Odthree}{\text{O}(3d)}
\newcommand{\Othreed}{\text{O}(3)^d}
\newcommand{\Othree}{\text{O}(3)\otimes {\bf I}_d}
\def\1{\bm{1}}
\newcommand{\D}{\mathcal{D}}
\newcommand{\X}{\mathcal{X}}
\newcommand{\Y}{\mathcal{Y}}
\newcommand{\K}{\mathcal{K}}
\newcommand{\Id}{\bf Id}
\newcommand{\fcnn}{{\mathsf{FCN}}_n}
\newcommand{\fcnnn}{{\mathsf{FCN}}}
\newcommand{\ntk}{{\mathsf{NTK}}}
\newcommand{\REG}{{\mathsf{+L2}}}
\newcommand{\LR}{{\mathsf{+LR}}}
\newcommand{\fcn}{\mathsf {FCN}_\infty }
\newcommand{\lcnn}{\mathsf {LCN}_n}
\newcommand{\lcnnn}{\mathsf {LCN}}
\newcommand{\lcn}{\mathsf {LCN}_\infty }
\newcommand{\lcninf}{\mathsf {LCN}_\infty }
\newcommand{\vecn}{\mathsf {VEC}_n}
\newcommand{\vecnn}{\mathsf {VEC}}
\newcommand{\vecinf}{\mathsf {VEC}_\infty }
\newcommand{\lapn}{\mathsf {LAP}_n}
\newcommand{\lapinf}{\mathsf {LAP}_\infty }
\newcommand{\gapn}{\mathsf {GAP}_n}
\newcommand{\gapnn}{\mathsf {GAP}}
\newcommand{\gap}{\mathsf {GAP}_\infty }
\newcommand{\infntk}{\Theta}
\newcommand{\M}{\mathcal M}
\newcommand{\I}{\mathcal I}
\newcounter{countitems}
\newcounter{nextitemizecount}
\newcommand{\setupcountitems}{%
  \stepcounter{nextitemizecount}%
  \setcounter{countitems}{0}%
  \preto\item{\stepcounter{countitems}}%
}
\newcommand{\computecountitems}{%
  \edef\@currentlabel{\number\c@countitems}%
  \label{countitems@\number\numexpr\value{nextitemizecount}-1\relax}%
}
\newcommand{\nextitemizecount}{%
  \getrefnumber{countitems@\number\c@nextitemizecount}%
}
\newcommand{\previtemizecount}{%
  \getrefnumber{countitems@\number\numexpr\value{nextitemizecount}-1\relax}%
}
\computecountitems\ifnumcomp{\previtemizecount}{>}{3}{\end{multicols}}{}}
\icmltitlerunning{Synergy and Symmetry in Deep Learning}
\begin{document}

\twocolumn[
\icmltitle{
Synergy and Symmetry in Deep Learning:\\Interactions between the Data, Model, and Inference Algorithm}




\begin{icmlauthorlist}
\icmlauthor{Lechao Xiao}{yyy}
\icmlauthor{Jeffrey Pennington}{yyy}
\end{icmlauthorlist}

\icmlaffiliation{yyy}{Google Research, Brain Team}

\icmlcorrespondingauthor{Lechao Xiao}{xlc@google.com}
\icmlcorrespondingauthor{Jeffrey Pennington}{jpennin@google.com}

\icmlkeywords{Machine Learning, ICML}

\vskip 0.3in
]



\printAffiliationsAndNotice{}  

\begin{abstract}
Although learning in high dimensions is commonly believed to suffer from the curse of dimensionality, modern machine learning methods often exhibit an astonishing power to tackle a wide range of challenging real-world learning problems without using abundant amounts of data. How exactly these methods break this curse remains a fundamental open question in the theory of deep learning. While previous efforts have investigated this question by studying the data ($\mathcal D$), model ($\mathcal M$), and inference algorithm ($\mathcal I$) as independent modules, in this paper, we analyze the triplet \dmi as an integrated system and identify important synergies that help mitigate the curse of dimensionality. 
We first study the basic symmetries associated with various learning algorithms ($\M$, $\I$),  
focusing on four prototypical architectures in deep learning: fully-connected networks ($\fcnnn$), locally-connected networks ($\lcnnn$), and convolutional networks with and without pooling ($\gapnn$/$\vecnn$). We find that learning is most efficient when these symmetries are compatible with those of the data distribution and that performance significantly deteriorates when any member of the \dmi triplet is inconsistent or suboptimal.

\end{abstract}

\section{Introduction}

Statistical problems with high-dimensional data are frequently plagued by the \emph{curse of dimensionality}, in which the number of samples required to solve the problem grows rapidly with the dimensionality of the input. The classical theory explains this phenomenon as the consequence of basic geometric and algebraic properties of high-dimensional spaces; for example, the number of $\epsilon$-cubes inside a unit cube in $\R^{d}$ grows exponentially like $\epsilon^{-d}$, and the number of degree $r$ polynomials in $\R^d$ grows like a power-law $d^r$. Since for real-world problems $d$ is typically in the hundreds or thousands, classical wisdom suggests that learning is likely to be infeasible. However, building off of the groundbreaking work AlexNet~\citep{krizhevsky2012imagenet}, practitioners in deep learning have tackled a wide range of difficult real-world learning problems~\citep{vaswani2017attention, devlin2018bert, silver2016mastering, senior2020improved, kaplan2020scaling} in high dimensions, once believed by many to be out-of-scope of current techniques. The astonishing success of modern machine learning methods clearly contradicts the curse of dimensionality and therefore poses the fundamental question: mathematically, how do modern machine learning methods break the curse of dimensionality?   

To answer this question, we examine the most fundamental ingredients of machine learning methods. They are the data ($\D$), the model ($\M$), and the inference algorithm ($\I$).

Data ($\D$) is of course central in machine learning. In the classical learning theory setting, the learning objective usually has a power-law decay $m^{-\alpha}$ as the function of the number of training samples $m$. 
The theoretical bound on $\alpha$ is usually small (e.g.,  $\alpha=d^{-1}$ \citep{von2004distance, bach2017breaking, bahri2021explaining}) and is of limited practical utility for high-dimensional data. On the other hand, empirical measurements of $\alpha$ in state-of-the-art deep learning models typically reveal values of $\alpha$ that are significantly larger (e.g.,  $\alpha=0.43$ for ResNets trained on ImageNet in Fig.\ref{fig:imagenet scaling}) even though $d$ is quite large (e.g.,  $d\sim10^5$ for ImageNet). This example suggests that the learning curve must have important functional dependence on $\M$ and $\I$. Indeed, as we will observe later, many of the best performing methods exhibit learning curves for which $\alpha=\alpha(m)$ actually \emph{increases} as $m$ becomes larger, i.e.,  data makes the usage of data more efficient. We call this phenomenon DIDE, for {\bf d}ata {\bf i}mproves {\bf d}ata {\bf e}fficiency; see Fig.\ref{fig:imagenet scaling}.

Designing machine learning models ($\M$) that maximize data efficiency is critical to the success of solving real-world tasks. Indeed, breakthroughs in machine learning are often driven by novel architectures such as LeNet \citep{lecun1998gradient}, AlexNet\cite{krizhevsky2012imagenet}, Transformer \citep{vaswani2017attention}, etc. While some of the inductive biases of these methods are clear (e.g.,  translation symmetries of CNNs), others tend to build off of prior empirical success and are less well-understood (e.g.,  the implicit bias of SGD). To build our understanding of these biases and how they affect learning, we conduct a theoretical analysis of them in the infinite-width setting \citep{neal, poole2016exponential, daniely2016toward, jacot2018neural, lee2019wide}, which preserves the most salient aspects of the architecture while enabling tractable calculations.

The inference procedure ($\I$) is what enables \emph{learning} in machine learning methods. It is widely believed that modern inference methods, specifically gradient descent and variants, ``implicitly" bias the solutions of the networks towards those that generalize well and away from those that generalize poorly~\citep{neyshabur2017implicit, gunasekar2018implicit, soudry2018implicit}. The effects of the inference algorithm are intimately tied to the specifics of the model (e.g.,  weight-sharing) and the data (e.g.,  data augmentation), and might not be fully understood with a fixed-data, fixed-model analysis. Indeed, good performance may derive from interactions between $(\M, \I)$, or $(\D, \I)$, or even $(\D, \M, \I)$. In Sec.~\ref{sec: dide}, we demonstrate the DIDE effect for a particular choice of $(\D,\M, \I)$ and show that this effect disappears if any one of $\D$, $\M$, or $\I$ is altered. 

The above discussion highlights the insufficiency of treating $\D$, $\M$, and $\I$ as separate non-interacting modules. They must be considered as an integrated system. Throughout this paper, we will refer to the triplet \dmi~as a (machine) learning system and the tuple $(\M, \I)$ as the learning algorithm of the system that operates on $\D$.

We focus our study on four prototypical deep learning architectures whose similarities and differences provide a rich test bed for analysis: fully-connected networks ($\fcnnn$), locally-connected networks ($\lcnnn$), convolutional networks with readout vectorization ($\vecnn$), and convolutional networks with readout global average pooling ($\gapnn$). We consider both finite-width ($\fcnn$, $\lcnn$, $\vecn$, $\gapn$) and infinite-width variants of these architectures ($\fcn$, $\lcn$, $\vecinf$, $\gap$). We examine the basic symmetries of the \dmi~triplets associated to these architectures and find that better architectures break spurious symmetries. We also identify a symmetry breaking effect due to finite-width and carefully examine the impact of this effect to the performance of the system.

\section{Related Work}
The study of infinite networks dates back to the seminal work of~\citet{neal} who showed the convergence of single hidden-layer networks to Gaussian Processes (GPs). Recently, there has been renewed interest in studying random, infinite networks starting with concurrent work on ``conjugate kernels''~\citep{daniely2016toward, daniely2017sgd} and ``mean-field theory''~\citep{poole2016exponential,schoenholz2016}, 
taking a statistical learning and statistical physics view of points, respectively. Since then, this analysis has been extended to include a wide range of architectures ~\citep{lee2018deep, matthews2018, xiao18a, novak2018bayesian, yang2019scaling, hron2020infinite}. The inducing kernel is often referred to as the Neural Network Gaussian Process (NNGP) kernel. The neural tangent kernel (NTK), first introduced in ~\citet{Jacot2018ntk}, along with follow-up work~\citep{lee2019wide, chizat2019lazy} showed that the distribution of functions induced by gradient descent for infinite-width networks is a Gaussian Process with NTK as the kernel. Since then, NNGP and NTK have become extremely useful and popular tools to study various properties of neural networks ~\citep{arora2019finegrained, adlam2020neural, bordelon2021spectrum, mei2021learning, bietti2021approximation, favero2021locality, xiao2021eigenspace} and many others. 

The implicit bias of gradient descent has been the focus of a number of recent works~\citep{soudry2018implicit, Lyu2020Gradient, ji2019implicit,ji2019gradient, chizat20a}, leading to a variety of noteworthy conclusions, including the convergence of GD to the maximal margin solution for logistic-type losses during late-time training~\citep{soudry2018implicit}. \citet{nakkiran2019sgd, hu2020surprising, rahaman2019spectral, xu2018understanding,xu2019frequency, su2019learning, yang2019fine} study the early-time dynamics and spectral biases of neural networks, leading to the conclusion that simpler functions are usually learned before more complex functions.

Understanding and exploiting the structural information in natural data are central aspects of designing machine learning systems. \citet{li2018measuring, goldt2020modeling, pope2021intrinsic} study the low-dimensional structure of natural data while \citet{bruna2013invariant, petrini2021relative} investigate the role of deformation stability of natural data. Designing networks that maximally respect the symmetries of natural data (e.g.,  translational invariance/equivalence of images~\citep{cohen2016group, zaheer2017deep}) is widely considered a principled approach in practice. Several works also demonstrate the possibility of learning such symmetries from scratch using natural or synthetic data~\citep{neyshabur2020towards, ingrosso2022data}. Nevertheless, recent breakthroughs in applying attention-based models \cite{dosovitskiy2020image, tolstikhin2021mlp, he2021masked} to computer vision have fundamentally challenged the significance that symmetries play in model design. Attention-based models have weaker inductive biases (lacking even translation equivariance) than those of convolutional networks, yet their performance is comparable in the large data regime \citep{zhai2022scaling}. Our analysis of DIDE (Fig.~\ref{fig:imagenet scaling}) sheds some light on why such models are able to reach good performance with the help of more data.

\section{Preliminaries and Notation}
We focus our presentation on the supervised learning setting and more concretely, on image recognition. Let $\D \subseteq \mathbb (\mathbb R^{d})^3 \times \mathbb R^{k}\equiv \mathbb R^{3d} \times \mathbb R^{k}$ denote the data set (training and test) and $\X=\left\{x: (x,y)\in \D\right\}$ and $\Y=\left\{y: (x,y)\in \D\right\}$ denote the input space (images) and label space, respectively. Here $d$ is the spatial dimension (e.g.,  $d=32\times 32$ for CIFAR-10) of the images and $3$ is the total number of channels (i.e.,  RGB). We assume $(x,y)\in\mathcal D$ is obtained from some data generating process with unknown distribution $\mu_{\mathcal D}$ and the learning task is to recover $\mu_{\mathcal D}$.

\subsection{Neural Networks}
\label{sec:notation}
We use $\fcnn$ to denote an $L$-hidden layer fully-connected network with identical hidden widths $n_l=n\in\mathbb N$ for $l = 1, ..., L$ and with readout width $n_{L+1} = k$ (the number of logits). For each $x\in\mathbb R^{3d}$, we use $h^l(x), x^l(x)\in\mathbb R^{n_l}$ to represent the pre- and post-activation functions at layer $l$ with input $x$. The recurrence relation $\fcnn$ is given by 
    \begin{align}
    \label{eq:recurrence}
    \begin{cases}
        h^{l+1}&=x^l W^{l+1} 
        \\
        x^{l+1}&=\phi\left(h^{l+1}\right) 
        \end{cases}
        \,\, \textrm{and} 
        \,\,
      W^{l+1}_{i, j}& = \frac {1} {\sqrt{n_{l+1}}}  \omega_{ij}^{l+1} 
    \end{align}
        where
        $\phi$ is a point-wise activation function, $W^{l+1}\in \mathbb R^{n_l\times n_{l+1}}$ are the weights and $\omega_{ij}^l$ are the trainable parameters, drawn i.i.d. from a standard Gaussian $\mathcal N(0, 1)$ at initialization. For simplicity of the presentation, the bias terms and the hyperparameters (the variances of the weights) are omitted; including them as hyperparameters will not significantly change any of the main conclusions.
        
        For convolutional networks, the inputs are treated as tensors in $(\mathbb R^d)^3$. The recurrence relation of convolutional networks can be written as 
        \begin{align}
        \begin{cases}
                x^{l+1}_{\alpha, j} &= \phi(h^{l+1}_{\alpha,j}) 
\\
 h^{l+1}_{\alpha, j} &= \frac 1 {\sqrt{(2k +1)n^{l}}}\sum_{i=1}^{n^{l}} \sum_{\beta = -k}^{k}  x^l_{\alpha + \beta, i} \omega^{l+1}_{ij, \beta}\,.
            \label{eq weighted sum}
        \end{cases} 
    \end{align}
        Here $\alpha\in [d]$ denote the spatial location, $i/j\in [n]$ denotes the fan-in/fan-out channel indices. For notational convenience, we assume circular padding and stride equal to 1 for all layers. The features of the penultimate layer are 2D tensors and there are two commonly used approaches to map them to the logit layer: (a) $\vecn$, which vectorizes the 2D tensor to a 1D vector, yielding a translation-equivariant inductive bias, or (b) $\gapn$, which applies a global average pooling layer to each channel, yielding a translation-invariant inductive bias. The readout layers for these models can be written as,
        \begin{alignat}{2}
            x^{L+1}_j &= \frac{1}{\sqrt{dn}} \sum_{i\in [n]}\sum_{\alpha\in [d]} x^{L}_{\alpha, i} w_{ \alpha, ij}^{L+1}\quad&(\vecn)\,, 
             \\
            x^{L+1}_j &= \frac{1}{\sqrt{n}} \sum_{i\in [n]}\left(\frac 1 {d}\sum_{\alpha\in [d]} x^{L}_{\alpha, i}\right) w_{ ij}^{L+1}\quad&(\gapn)\,.
        \end{alignat}
         The key difference between the two architectures is that, in $\vecn$, each pixel in the penultimate layer has its own readout variable, whereas in $\gapn$ the pixels within the same channel share the same readout variable. It is clear that the function space of $\vecn$ contains that of $\gapn$. 
        
        Locally Connected Networks ($\lcnn$) \citep{fukushima1975cognitron, lecun1989generalization} are convolutional networks {\it without} weight sharing between spatial locations. They share the connectivity pattern and topology of a standard convolutional network, but the weights are not shared across spatial patches. Mathematically, the network is defined as in \eqref{eq weighted sum}, but with all the {\it shared} parameters $\omega_{ij, \beta}^{l}$ replaced by {\it unshared} $\omega_{ij, \alpha, \beta}^{l}\sim \mathcal N(0, 1)$. In this work, we assume that the $\lcnn$ are always associated with a vectorization readout layer and it is therefore clear that the function space of $\lcnn$ is a superset  of $\vecn$. Interestingly, $\lcnn$ is also a subset of $\vecnn_{dn}$ when the network is $d$ times wider.
    
    \begin{figure*}[t]
        \begin{center}
        \includegraphics[width=.9\textwidth]{figures/ICML2022/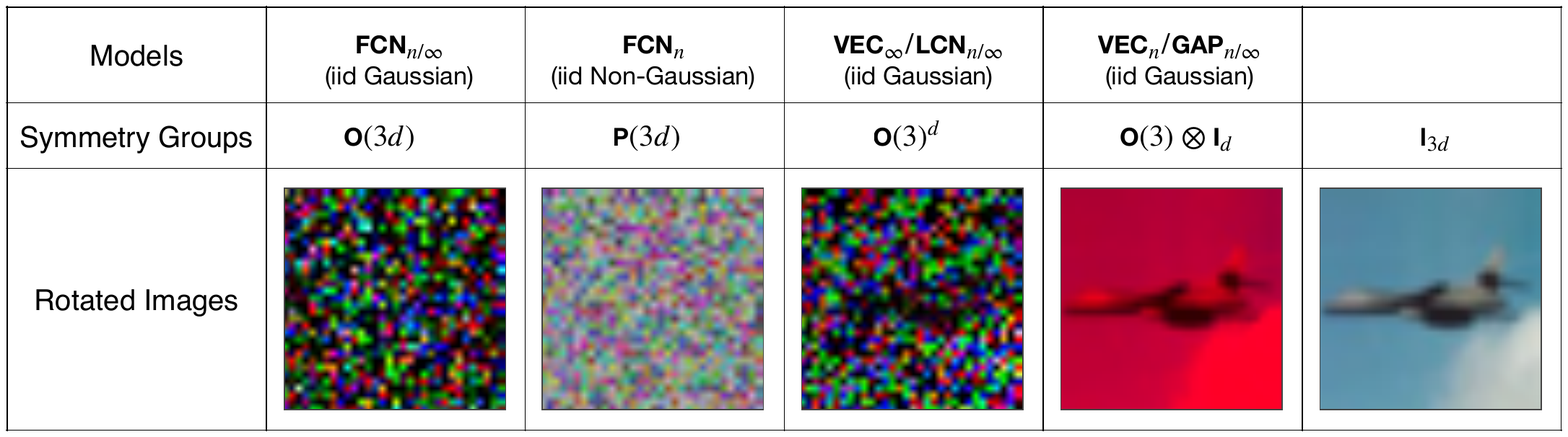}
        \caption{{\bf Models and associated symmetry groups.}
        Top: architectures (initialization scheme). 
        Middle: implied symmetry groups. 
        Bottom: rotated images, where the rotation is randomly drawn from the corresponding symmetry group.
        The largest symmetry group for which a random transformation does not obfuscate the underlying image is evidently $\Othree$.
        }
        \label{fig:CIFAR-10-rotation}
        \end{center}
    \end{figure*}
        \begin{theorem}[Sec.~\ref{sec:proof of inclusion thm}]\label{thm:change of prior}
        Let ${\mathsf { VEC} }_n /{\mathsf {LCN} }_{n}/\gapn/\fcnn$ denote the set of functions that can be represented by a $L$-hidden layer ${\mathsf { VEC} }_n /{\mathsf {LCN} }_{n}/\gapn/\fcnn$ network with hidden-layer width $n$. Then
        \begin{align}
        \gapn \subseteq {\mathsf { VEC} }_n \subseteq  {\mathsf {LCN} }_{n}
        \subseteq {\mathsf {VEC} }_{dn} \subseteq {\mathsf {FCN} }_{d^2n}\,.
        \end{align}
        \end{theorem}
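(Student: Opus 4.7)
\noindent\emph{Proof proposal.} I would prove the chain of four inclusions by constructing, in each case, an explicit weight assignment that embeds the smaller function class into the larger. The easy ends are immediate. For $\gapn\subseteq\vecn$, the two architectures share identical hidden layers and differ only at the readout, so given $\gapn$ readout weights $w^{L+1}_{ij}$ one simply sets the spatially indexed $\vecn$ readout weights $w^{L+1}_{\alpha,ij}:=w^{L+1}_{ij}/\sqrt{d}$ for all $\alpha$; comparing the two readout formulas confirms they agree once the normalization $1/\sqrt{dn}$ is reconciled with $1/(d\sqrt{n})$. For $\vecn\subseteq\lcnn$, $\vecn$ is literally the restriction of $\lcnn$ to tied filters, so $\omega^{l+1}_{ij,\alpha,\beta}:=\omega^{l+1}_{ij,\beta}$ uniformly in $\alpha$ realizes the embedding layer by layer. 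Finally, $\vecnn_{dn}\subseteq\fcnnn_{d^2n}$ follows by flattening: each $d\times dn$ hidden tensor is viewed as a $d^2n$-vector, and the convolution becomes a specific sparse, weight-shared linear map on $\R^{d^2n}$, which is a valid $\fcnnn_{d^2n}$ weight matrix; the pointwise nonlinearity is unchanged by flattening.

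The crux, and only nontrivial inclusion, is $\lcnn\subseteq\vecnn_{dn}$. My plan is to expand the channel index from $i\in[n]$ to $(\gamma,i)\in[d]\times[n]$ and maintain the layerwise invariant $\tilde x^l_{\alpha,(\alpha,i)}=x^l_{\alpha,i}$ on the ``diagonal'', while the off-diagonal entries $\tilde x^l_{\alpha,(\gamma,i)}$ with $\gamma\neq\alpha$ are allowed to contain arbitrary spurious values that will be filtered out at the next layer. For the first hidden layer I would set $\tilde\omega^1_{i(\gamma,j),\beta}:=\omega^1_{ij,\gamma,\beta}$, so that $\tilde h^1_{\alpha,(\gamma,j)}$ is the LCN preactivation obtained by applying the location-$\gamma$ filter to the neighborhood of $\alpha$; the diagonal $\gamma=\alpha$ entry then equals $h^1_{\alpha,j}$ on the nose. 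For subsequent hidden layers I would set $\tilde\omega^{l+1}_{(\gamma,j)(\gamma',j'),\beta}:=\sqrt{d}\cdot\mathbf{1}[\gamma=\gamma'+\beta]\cdot\omega^{l+1}_{jj',\gamma',\beta}$; substituting the invariant into the $\vecnn_{dn}$ convolution, the indicator enforces $\gamma=\alpha+\beta$, so the only surviving input terms are the diagonal entries $\tilde x^l_{\alpha+\beta,(\alpha+\beta,j)}=x^l_{\alpha+\beta,j}$, the $\sqrt{d}$ converts the fan-in normalization $1/\sqrt{(2k+1)dn}$ back to the $\lcnn$ normalization $1/\sqrt{(2k+1)n}$, and one reads off $\tilde h^{l+1}_{\alpha,(\alpha,j')}=h^{l+1}_{\alpha,j'}$, preserving the invariant after applying $\phi$. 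The readout is analogous, with $\tilde w^{L+1}_{\alpha,(\gamma,i),j}:=\sqrt{d}\cdot\mathbf{1}[\gamma=\alpha]\cdot w^{L+1}_{\alpha,ij}$ selecting the diagonal and the $\sqrt{d}$ reconciling normalizations. Notably, no assumption on $\phi(0)$ is required, since the spurious off-diagonal values are never accessed.

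The main obstacle lies precisely in this third step. A first attempt to preserve a clean one-hot invariant (with true zeros off the diagonal) fails already at layer one, because a single translation-equivariant filter cannot synthesize a location-dependent mask from a location-independent input. The resolution is to give up on sparsity and instead use the delta $\mathbf{1}[\gamma=\gamma'+\beta]$ baked into the next-layer filter as a selector that reads off only the diagonal entries; the remaining work is bookkeeping, namely tracking the $\sqrt{d}$ factors arising from the mismatch between the $\lcnn$ and $\vecnn_{dn}$ fan-in normalizations and verifying the induction closes across both the first and the readout layers, which have different normalization constants than the interior layers.
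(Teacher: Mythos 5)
Your proposal is correct and takes essentially the same route as the paper: the only nontrivial inclusion, $\lcnn \subseteq \vecnn_{dn}$, is handled in both by enlarging the channel dimension into $d$ blocks of size $n$, maintaining by induction the diagonal invariant $\tilde x^l_{\alpha,(\alpha,i)} = x^l_{\alpha,i}$ (with unconstrained off-diagonal entries), and choosing block-selector weights with a $\sqrt d$ normalization factor — your indicator $\mathbf{1}[\gamma=\gamma'+\beta]$ is precisely the paper's support condition $\alpha n \leq j < (\alpha+1)n$, $(\alpha+\beta)n \leq i < (\alpha+\beta+1)n$ written with pair indices, and the remaining inclusions are treated as immediate in both.
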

        As remarked above, the random initialization of parameters endows $\gapn$ with a translation-invariant prior, which may be well-suited to many image-classification tasks. This observation, combined with the result from~\cref{thm:change of prior} that the function space of $\gapn$ is the smallest, suggests that networks with this architecture may enjoy favorable generalization properties. Indeed, prior work~\cite{novak2018bayesian,lee2020finite,neyshabur2020towards} has found that $\gapn$ can significantly outperform $\vecn$, $\lcnn$, and $\fcnn$, a conclusion we also find in Sec.~\ref{sec: empirical}.
        
        We emphasize that the above observation relies on a notion of the prior induced by initialization and says little about the effect of optimization. It is possible that gradient descent could update the readout weights of a network from the $\vecn$ class toward a configuration that approximately implements average pooling, thereby pushing the model closer to a member of the $\gapn$ class. Alternatively, if the weights remain close to their initial random values, the function might more closely resemble a member of the $\lcnn$ class. This perspective gives some intuition for how the inference algorithm $\I$ can interact with the model $\M$ to produce predictive functions with significantly different generalization properties. We return to this theme in Sec.~\ref{sec: dide}.

    \subsection{Infinite Network: Gaussian Processes and the Neural Tangent Kernels}
    In order to better disambiguate the effects of the model $\M$ from the inference algorithm $\I$, it is useful to examine our model families in the limit of infinite width. This limit facilitates simpler theoretical analysis while simultaneously preserving most of the salient ingredients of the finite-width models. Below, we briefly review several useful aspects of infinitely-wide networks and refer interested readers to the references for a more comprehensive introduction.  
    
    \paragraph{Neural Networks as Gaussian Processes (NNGP).}
    As the width $n\to\infty$, at initialization the output $f_0(\X)$ forms a Gaussian Process $f_0(\X)\sim \mathcal {GP}(0, \K(\X, \X))$, known as the NNGP \citep{neal, lee2018deep, matthews2018}. Here $\K$ is the GP kernel and can be computed in closed-form for a variety of architectures \citep{novak2019neural}. By treating this infinite width network as a Bayesian model (aka Bayesian Neural Networks) and marginalizing over the training set $(\X_T, \Y_T)$, the posterior is also a GP whose mean is given by  $\K(\X_*, \X_T)\K^{-1}(\X_T, \X_T)\Y_T$. 

    \paragraph{Neural Tangent Kernels (NTK).}
    Recent advances in the global convergence theory of large width networks~\citep{Jacot2018ntk, du2018global, allen2018convergence, zou2020gradient, lee2019wide} have shown that under certain assumptions, the gradient descent dynamics of a network converge to that of kernel gradient descent as the widths approach infinity, where the kernel is the NTK \citep{Jacot2018ntk}, denoted by $\Theta$.  
    As such, in the infinite width limit, when the loss is the mean squared error (MSE), the mean prediction (marginarized over random initialization) can be solved analytically. In particular, when the training time $t=\infty$, the prediction is given by 
        \begin{align}
      &f(\X_*) =\infntk\left(\X_*, \X_T\right)\infntk^{-1}(\X_T, \X_T)\Y_T \,,
      \label{eq:lin-exact-dynamics-mean}
      \end{align}
For convenience, we use $\fcn(x)$, $\lcn(x)$, $\vecinf(x)$ and $\gap(x)$ to denote the infinite width solutions (either via NNGP inference or NTK regression) for the corresponding architectures.

    \section{Symmetries of Machine Learning Systems}

    In this section, we study the symmetry properties of various machine learning systems \dmi, focusing on how invariances implied by the learning algorithm $(\M, \I)$ interact with the data distribution $\mu_{\mathcal D}$.

    To provide concrete context, let us consider solving image classification using kernel regression with an inner-product kernel $K$ (i.e.,  $K(x, \bar x) = k(\langle x,  \bar x\rangle)$ for some function $k$). Since $K$ is invariant to rotations of its inputs (i.e.,  for any rotation operator $\tau$, $K(\tau x, \tau \bar x)= K(x, \bar x)$), if we rotate all (both train and test) images by any fixed rotation $\tau$, the kernel is unchanged and so are the predictions. Because natural images surely exhibit spatial structure that is destroyed by rotations (see Fig.~\ref{fig:CIFAR-10-rotation}), we might expect such kernels to perform poorly on image classification tasks, and indeed we find this to be the case (see Sec.~\ref{sec: empirical}). In this sense, the symmetry properties implied by the learning algorithm are incompatible with the data distribution $\mu_D$, and we regard them as \emph{spurious}. Below we describe some notation and results that will help us analyze this type of interaction more systematically.

    For a deterministic (stochastic) learning algorithm $\mathcal A = (\M, \I)$, we use $\mathcal A(\D_T)$ to denote the learned function (distribution of the learned functions) using training set $\D_T$. We use $\mathcal A^\tau(\D_T)$ to denote the learned function(s) using $\tau(\D_T)$, which make predictions on the transformed test point $\tau(\X_*)$. In other words, 
    all inputs, including training and test inputs, are pre-processed by a common transformation $\tau$ before feeding to the learning algorithm $\mathcal A$.  
    For convenience, for random variables $A$ and $B$, we use $A=^d B$ to indicate that they have the same distribution.
    \begin{definition}
    Let $\mathcal G$ be a group of transformations $\mathbb R^{3d}\to \mathbb R^{3d}$. A deterministic (stochastic) learning algorithm $\mathcal A =(\M, \I)$ is $g$-invariant if $\mathcal A = \mathcal A^g$ ($\mathcal A =^d \mathcal A^g$). In this case, we say the system $(\D, \M, \I)$ is $g$-invariant and use the notation $(\D, \M, \I) = (g\D, \M, \I)$. If this holds for all $g\in\mathcal G$, then we say the algorithm and the system are $\mathcal G$-invariant.  
    \end{definition}

    \paragraph{Comparing with Functional Invariance.} 
    The flavor of invariance studied this paper is {\it algorithmic} invariance, as it concerns a \emph{system} or a \emph{learning algorithm}, and is qualitatively different from the {\it functional} invariance studied elsewhere~\citep{cohen2016group}. 
    Recall that a function $f$ is (functionally) invariant to a group $\gG$ if $f(\tau x) = f(x)$ for all $\tau\in \gG$. Natural images are often considered to be translationally invariant, which is a key motivation for the usage of {\it convolutional} networks in computer vision. These symmetries are {\it hard-coded} into the architectures\footnote{In the idealized setting when circular padding is applied and the readout layer is a global average pooling.} as a kind of {\it inductive bias} and the (post-pooling) representations are invariant to this group by design. As a consequence, the corresponding hypothesis class, defined by these networks, is more restricted, which could lead to better generalization performance if the symmetry were exact~\citep{shalev2014understanding}. 
    Algorithmic invariance is weaker than functional invariance. 
    For example, in the discussion above, the learning system defined by kernel regression with an inner product kernel is algorithmically invariant to the rotational group $\gG$ since $K(\tau x, \tau \bar x)= K(x, \bar x)$ for all $\tau\in\gG$, but the learned function itself is not \emph{functionally} invariant because $K(\tau x, \bar x)\neq K(x, \bar x)$.\footnote{This stronger condition could be achieved by defining an invariant kernel $K^{\text{inv}}(x, \bar x)= \int_{\tau\in\gG}K(x, \tau\bar x)d\tau$, or approximated by augmenting the training set by group actions as in~\cite{chen2020group}, but we do not pursue this here} 
    
    To present the implied symmetry properties of the main architectures under study, we need to introduce some notation. Let $\Odthree$ denote the orthogonal group on the flattened input space $\mathbb R^{3d}$. The subgroup $\Othreed \leq \Odthree$ operates on the un-flattened input $(\mathbb R^3)^d$, whose elements rotate each pixel $x_\alpha\in\mathbb R^3$ by an independent element $\tau_\alpha\in\text{O}(3)$. The smaller subgroup $\Othree \leq\Othreed$ applies the {\it shared} rotation, i.e.,  $\tau_\alpha=\tau$ to all $x_\alpha$ for $\alpha\in [d]$. Similarly, we use $\text{P}(3d)$ to denote the permutation group on $\mathbb R^{3d}$ and $\text{P}(3)^{d}$ and $\text{P}(3)\otimes {\bf I}_d$ are defined similarly. Note that rotating $\X$ by $\tau$ is equivalent to transforming the underlining coordinate systems of the input by the adjoint $\tau^*=\tau^{-1}$. Fig~\ref{fig:CIFAR-10-rotation} displays an image from CIFAR-10 under five families of rotations. We use 
    $\fcnnn_{n}, \vecn$, etc. to denote the output function (distribution) of a finite-width network obtained by SGD, in which the random initialization is the only source of randomness.

    \begin{theorem}[Sec.\ref{sec:proof of symmetries}]
    \label{thm: rotation}
    If the initial parameters of the networks defined in Sec.~\ref{sec:notation} are iid samples from $\mathcal N(0, 1)$, then the predictions from finite-width networks trained by SGD or infinite-width networks trained by kernel regression enjoy the following symmetries: 
\end{theorem}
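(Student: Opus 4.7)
My plan is to establish each invariance by constructing a parameter-level symmetry. For each architecture $\M\in\{\fcnnn, \lcnnn, \vecnn, \gapnn\}$ and each $\tau$ in the associated group $\gG$, I would produce a measurable bijection $g_\tau$ on parameter space such that (a) $f_{g_\tau(\theta)}(x) = f_\theta(\tau x)$ for every input $x$, and (b) $g_\tau(\theta_0) =^d \theta_0$ at initialization. Property (a) intertwines the input action $\tau$ with a parameter-space action $g_\tau$, and property (b) says the iid Gaussian initialization law is preserved under $g_\tau$. Both the finite-width SGD predictor and the infinite-width NTK/NNGP predictors will then inherit $\gG$-invariance via a change-of-variables argument.

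The construction acts only on the first layer, since only that layer sees the raw input. For $\fcnn$ with $\tau\in\mathrm{O}(3d)$, set $g_\tau(W^1)=\tau^\top W^1$ and leave deeper weights fixed: (a) follows from $(\tau x)W^1 = x(\tau^\top W^1)$, and (b) holds because the columns of $W^1$ are iid $\mathcal{N}(0, I_{3d})$, hence orthogonally invariant. For $\lcnn$ with $\tau=(\tau_\alpha)_\alpha\in\mathrm{O}(3)^d$, set $\omega^1_{ij,\alpha,\beta}\mapsto\sum_k(\tau_{\alpha+\beta})_{ki}\,\omega^1_{kj,\alpha,\beta}$; since the first-layer weights are iid Gaussian in the channel index $i$ and $g_\tau$ acts on each weight by a separate orthogonal mixing of $i$, the joint law is preserved. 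For $\vecn$ and $\gapn$ with $\tau\in\mathrm{O}(3)$ applied uniformly at every pixel, use the same formula with $\tau_{\alpha+\beta}\equiv\tau$; the shared convolutional weights are still iid Gaussian in the channel index, and the argument is identical. The permutation groups $\mathrm{P}(3d)$, $\mathrm{P}(3)^d$, and $\mathrm{P}(3)\otimes\mathbf{I}_d$ are subgroups of the corresponding orthogonal groups, so the same $g_\tau$ suffices.

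The transfer from parameters to predictions is the final step. Each $g_\tau$ above is a linear map with orthogonal Jacobian, so by the chain rule $\nabla_\theta\ell(y,f_\theta(\tau x)) = (Dg_\tau)^\top \nabla_\phi\ell(y,f_\phi(x))|_{\phi=g_\tau(\theta)}$. Running SGD on $\tau(\mathcal D_T)$ from $\theta_0$ and running SGD on $\mathcal D_T$ from $g_\tau(\theta_0)$ with the same random seed therefore produce trajectories related by $\phi_t=g_\tau(\theta_t)$; combined with (b) and the identity $f_{\theta_t}(\tau x_*)=f_{\phi_t}(x_*)$, this yields the claimed distributional equality $\mathcal A^\tau(\mathcal D_T)(\tau x_*) =^d \mathcal A(\mathcal D_T)(x_*)$. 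For the infinite-width predictor \eqref{eq:lin-exact-dynamics-mean}, it suffices to show $\mathcal K(\tau x,\tau\bar x)=\mathcal K(x,\bar x)$ and analogously for $\Theta$, which I would establish by induction on depth: the first-layer kernel reduces to $\gG$-preserved inner products (the global inner product for FCN, per-pixel inner products for LCN, and a shared per-pixel inner product for VEC/GAP), and the recursion composes pointwise nonlinearities with spatial averaging, both of which preserve invariance.

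The main obstacle is unified bookkeeping across the four architectures and both training regimes; the conceptual crux, however, is a single observation that makes (a) and (b) uniform across all four models: $g_\tau$ acts only on the three-dimensional channel axis of the first-layer weight, and in each architecture that weight is iid Gaussian in this axis, so orthogonal mixing preserves its law. Once this is isolated, (a) becomes a routine substitution, (b) is immediate, and the rest is a transfer lemma from parameter-space symmetry to predictor-space symmetry.
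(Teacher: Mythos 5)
Your proposal is correct and follows essentially the same route as the paper: the paper's proof likewise couples the first-layer parameters via $\theta^{\tau}=\tau^{*}\theta_0$ (your $g_\tau$), uses Gaussian invariance of the initialization, and proves by induction over (S)GD steps that the two trajectories stay related by $\tau^{*}$, with the $L^2$ term handled by norm invariance. Your explicit depth-induction for the NNGP/NTK kernels matches how the paper treats the infinite-width case via the kernel forms in the main text, so there is no substantive difference in approach.
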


\begin{itemize}
  \item
        $\fcnnn_{n/\infty}$ are $\Odthree$-invariant
    \item
        $\lcnnn_{n/\infty}$ and $\vecinf$ are $\Othreed$-invariant
    \item
        $\vecn$ and $\gapnn_{n/\infty}$ are $\Othree$-invariant.
\end{itemize}

    The $\Odthree$-invariance of $\fcnnn_{n/\infty}$ follows from the rotational invariance of the Gaussian measure, and has been observed in many prior works, including~\citep{wadia2020whitening, li2020convolutional}. Rotating the input by $\tau \in\Odthree$ is equivalent to rotating the weight matrix $\omega$ of the first layer by $\tau^*$, and since $\tau^*\omega =^{d}\omega$ for $\omega\sim\mathcal N(0, 1)^{3d}$, the distribution of the output functions at step 0 (aka initialization) is invariant. This observation implies that the first gradient is also $\Odthree$-invariant, which further implies the $\Odthree$-invariance of the output function after the first gradient update. By induction, this invariance property holds throughout the course of gradient descent training, even with $L^2$-regularization as the $L^2$-norm is rotationally invariant. Such invariant property also holds for (finite-width) Bayesian posterior inference thanks to the Bayes theorem: $ P(\tau \gX^*| \tau \gX_T)  =  P( \gX^*| \gX_T) $ because $ P(\tau \gX^*,  \tau \gX_T)/P(\tau \gX_T) = P( \gX^*, \gX_T)/P(\gX_T)$.

    For the same reason, $\lcnn$ is $\Othreed$-invariant because each patch of the image uses independent Gaussian random variables. In addition, weight-sharing in $\vecn$ and $\gapn$ breaks the $\Othreed$ symmetry, reducing it to $\Othree$. 

    For infinite networks, $\lcn= \vecinf$ \citep{xiao2018dynamical,novak2018bayesian, garriga2018deep}. The kernels
    of $\vecinf$ and $\gap$ are of the forms 
    \begin{align}
        \infntk_{\vecnn}(x, x') &= k(\{\langle x_\alpha, x'_\alpha\rangle \}_{\alpha\in [d]}) 
        \\
        \infntk_{\gapnn}(x, x') &= k(\{\langle x_\alpha, x'_{\alpha'}\rangle\}_{\alpha, \alpha'\in [d]}).
    \end{align}
    The former depends only on the inner product between pixels in the {\it same} spatial location,  
    breaking the $\Odthree$ symmetry of $\fcn$ and reducing it to $\Othreed$. In addition, the latter depends 
    also on the inner products of pixels across different spatial locations due to pooling, which breaks the $\Othreed$ symmetry and reduces it to $\Othree$. Noting that 
    ${\dim}(\Odthree) = 3d(3d-1)/2 > {\dim}(\Othreed) = 3d > \text{dim}(\Othree)=3$, we see that $\lcnn/\vecinf$ dramatically reduces the dimensionality of the symmetry group. As we will observe in Secs.~\ref{sec: empirical} and~\ref{sec: dide}, if a symmetry group is inconsistent with the data distribution, the performance of the associated learning algorithm tends to diminish in proportion to the dimension of the spurious symmetry group; see Fig.~\ref{fig:rotation_vs_accuracy}.
    
    The results of the paper are presented in the most {\it vanilla} setting. Our methods can easily extend to more complicated architectures like ResNet\citep{he2016deep}, MLP-Mixer\citep{tolstikhin2021mlp}, etc. The symmetry groups of such systems need to be computed in a case-by-case manner by identifying the invariant group of the random initialization and training procedures. 
    For example, the orthogonal group type of symmetries needed to be replaced by the permutation-type of symmetries if non-Gaussian i.i.d. initialization or/and $L^p$ ($p\neq 2$) regularization. However, we empirically observe that swapping the Gaussian initialization by the uniform initialization in the first layer does not essentially change the performance of the network; see Sec.~\ref{sec:plots dump}. This observation indicates that the permutation group may exhibit a similar degree of spuriousness as the rotation group; however, more rigorous and thorough experiments are needed to confidently confirm this claim, which is left for future work.
    Moreover,  the invariance property studied here is mainly coming from the first layer and it is possible that later layers could contribute new invariances to the system. For example, owing to the non-overlapping between patches in ViT \citep{dosovitskiy2020image}, there could be permutation symmetries between the patches in the subsequent self-attention layer (assuming no positional encoding). 
    Finally, for the sake of simplicity, we use NTK-parameterization \citep{Jacot2018ntk} but our results apply to other network parameterizations, including standard- \citep{sohl2020infinite}, meanfield- \citep{mei2018mean}, $\mu$-parameterizations \citep{yang2020feature}. 
    In particular, both finite- and infinite-width FCNs still suffer from the most spurious symmetries $\Odthree$ for all such parameterization schemes, which may explain the poor performance of FCN in the ``feature learning" regime (e.g.,  $61.5\%$ accuracy on CIFAR-10, Table 1. in \citet{yang2022efficient}.)

    \section{Empirical Analysis}\label{sec: empirical}

    \begin{figure*}[t]
    \centering
    \includegraphics[width=0.32\textwidth]{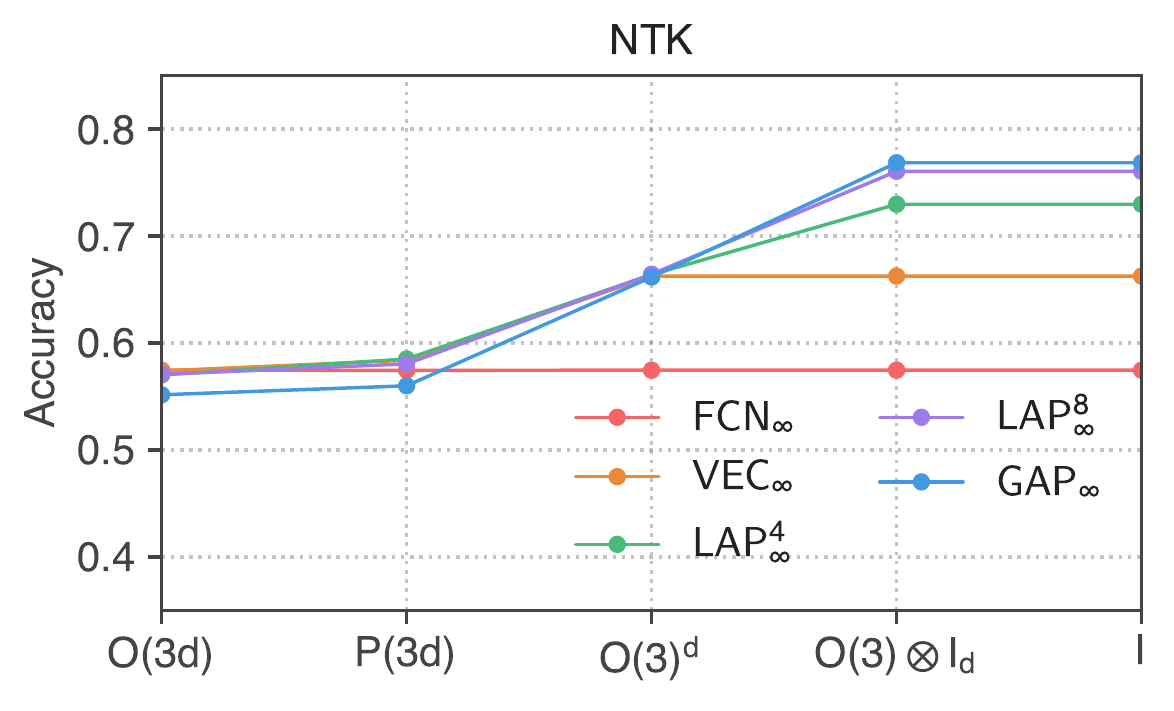}
    \includegraphics[width=0.32\textwidth]{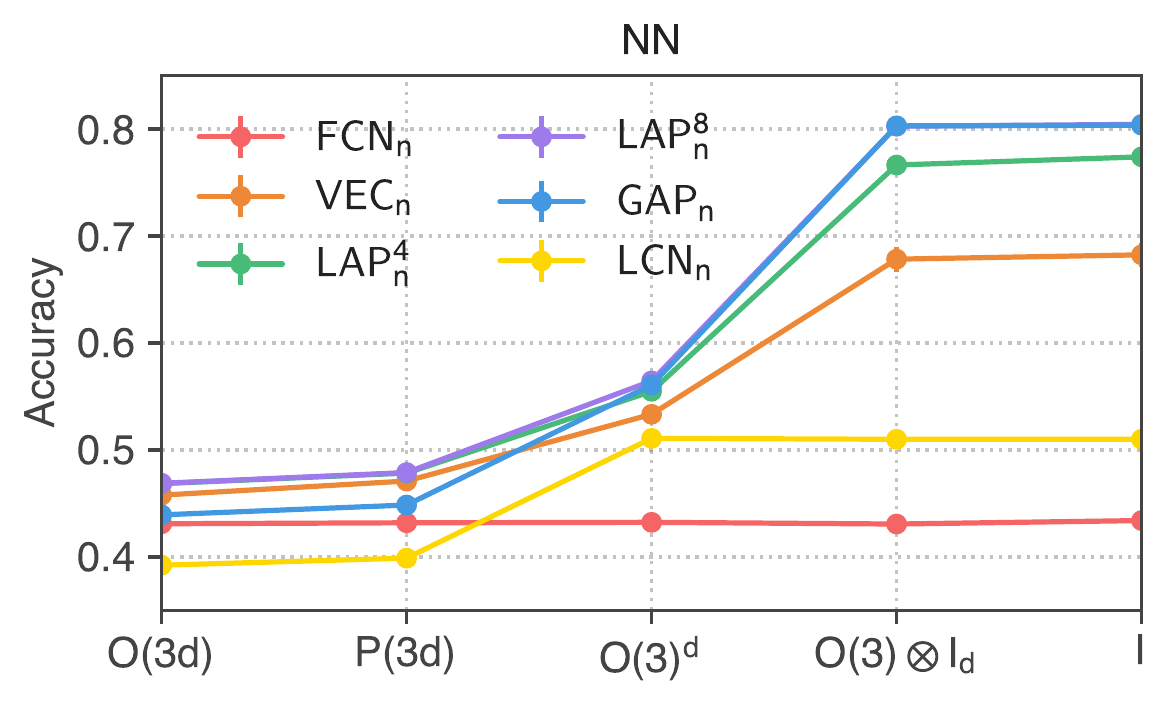}
    \includegraphics[width=0.32\textwidth]{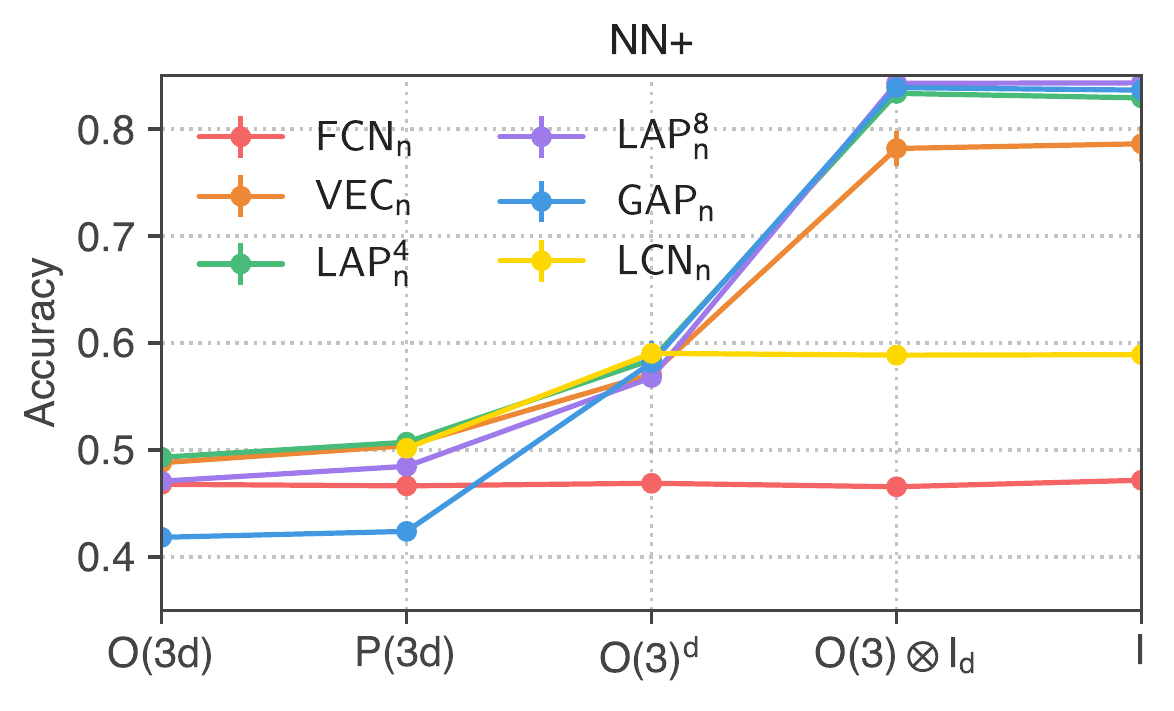}
    \caption{
    {\bf Performance of various architectures under different data transformations}.
    Left: network trained using NTK. Middle: finite-width network trained with a small learning rate and no L2-regularization. Right: larger learning rate and L2-regularization.  
    We transform all images in CIFAR-10 by a random element from one of the five groups ($x$-axis) and plot the accuracy ($y$-axis) for 6 architectures. Performance degrades in the presence of spurious symmetry, and the decrease is similar regardless of whether that symmetry arises from the data or from the model (see Thm.~\ref{thm: rotation}).
    }
    \label{fig:rotation_vs_accuracy}
\end{figure*}  

    This section focuses on empirical analysis. Details regarding the experimental setup of this and next section can be found in Sec.~\ref{sec:experimental details}. The goal is to (1) verify Theorem \ref{thm: rotation}, (2) study the consistency between $\D$ and $\mathcal A=(\M, \I)$ via the lens of symmetries, and (3) study the effect of symmetry breaking by comparing $\vecn$ to $\vecinf$. In light of the visualization of the images in Fig.~\ref{fig:CIFAR-10-rotation}, as well as the numerical performance of the various methods mentioned below, $\Othree$ is the largest symmetry group that is compatible with $\mu_D$. In what follows, we regard all larger symmetries ($\Othreed$, $\text{P}(3d)$ and $\Odthree$) as spurious. 
    \subsection{Experimental Setup.}

    We conduct experiments on $\D= \text{CIFAR-10}$ \citep{krizhevsky2009learning}, which is a standard image dataset that consists of $50,000/10,000$ training/test images. We vary each member of \dmi as follows. {\bf Five Datasets.} We create 5 families of new datasets $\tau D$ by rotating all input images in $\D$ by $\tau$, a fixed random element of one of the five groups: identity operator ${\bf I}_{3d}$, pixel-wise shared-rotations, $\Othree$, pixel-wise (unshared-)rotations $\Othreed$, permutations $\text{P}(3d)$ and global rotations $\Odthree$.  {\bf Six Models.} In addition to $\fcnnn$, $\lcnnn$, $\vecnn$ and $\gapnn$, we add $\mathsf{LAP}^{4/8}$ which are the same as $\gapnn$ except the readout layer is replaced by the {\bf L}ocal {\bf A}verage {\bf P}ooling with window size $4\times 4/8\times 8$. All networks have 8 hidden layers. {\bf Three Inference Algorithms.} (1) NTK regression\footnote{
    When investigating the impact of inference algorithms, it would be preferable to compare finite-width Bayesian inference to SGD. Unfortunately, Bayesian inference is too expensive to perform exactly and approximations may induce unwanted biases. As such, we instead use infinite-width Bayesian inference (i.e.,  NNGP regression~\citep{hron2020exact}), whose performance is usually very similar to NTK regression.} (aka infinite-width networks), (2) NN, our baseline for finite-width networks which is trained with momentum using a {\it small learning rate} without $L^2$ regularization; (3) NN+:= NN$\LR \REG$, i.e.,  using a larger learning rate ($\mathsf{+LR}$) and adding $L^2$ regularization ($\mathsf{+L2}$). We plot the test accuracy for each \dmi (a total of $90=3\times 6\times 5$) in Fig.~\ref{fig:rotation_vs_accuracy}. The accuracy for finite-width networks is averaged over five runs\footnote{A few \dmi have only 1 or 2 successful runs.} for each \dmi, and in each run the rotation $\tau$ is resampled. Note that the total variance across runs is small, indicating that the particular choice of $\tau$ has a negligible effect on the results.

    \subsection{Verifying Theorem \ref{thm: rotation}} 
    As expected from Theorem \ref{thm: rotation}, Fig.~\ref{fig:rotation_vs_accuracy} shows that across $\ntk$/NN/NN+, the performance of $\fcnnn_{n/\infty}$ is invariant to all symmetry transformations, the performance of $\lcnnn_{n}$ and $\vecinf=\lcnnn_\infty$ are invariant to $\Othreed$ (and its subgroups), and the performance of $\vecn$, $\mathsf{LAP}^{4/8}$ and $\gapnn_{n/\infty}$ are invariant to $\Othree$.
    
    \subsection{Effect of spurious symmetries}
    In order to analyze the consistency of the data $\D$ and algorithm $\mathcal A$, we examine performance in the presence of various spurious symmetries, which we introduce through the five rotated datasets and through the six different model families. We focus on two main findings: (1) the strength of the spurious symmetry controls performance, regardless of how it is introduced; and (2) SGD confers its main benefits in the absence of spurious symmetries.
    
    \paragraph{Performance dictated by spurious symmetries.} For each fixed $\I$, we choose the highest-performing triplet $(\D, \gapnn, \I)$ as a baseline that exhibits the strongest consistency between $\mathcal A$ and $\mu_D$. We then progressively break this consistency by injecting spurious symmetries in two ways: (1) fixing $(\gapnn, \I)$ and changing the dataset to $\tau\D$; and (2) fixing $(\D, \I)$ and changing $\gapnn$ to $\M_\tau$, where $\M_\tau$ represents an architecture that is $\tau$-invariant (c.f.\cref{thm: rotation}).
    
    From Fig.~\ref{fig:rotation_vs_accuracy}, we see that for each fixed $\I\in \{\ntk, \text{NN}, \text{NN+}\}$, test performance monotonically decreases as the symmetries become more ``spurious''. We also observe that performance is to a good approximation determined by the spurious symmetry itself, independent of the way it was introduced. In particular, across all settings we observe that the performance of $(\tau \D, \M, \I)$ is close to that of $(\D, \M_\tau, \I)$ which is itself nearly the same as $(\tau \D, \M_\tau, \I)$. As a concrete example of this relationship, in Fig.~\ref{fig:rotation_vs_accuracy} for $\I=\ntk$, the performance of $\gap$ under an $\Othreed$ data transformation equals the performance of $\vecinf$ with no data transformation, which is expected since~\cref{thm: rotation} implies $\vecinf$ is $\Othreed$-invariant.

    \paragraph{Spurious symmetries eliminate the benefit of SGD.}
     We examine the benefits of SGD by varying $\I$ from $\ntk$ (no SGD-impact, left panel of Fig.~\ref{fig:rotation_vs_accuracy}) to $\text{NN}$ (weak SGD-impact, middle panel) to $\text{NN+}$ (strong SGD-impact, right panel). We find that the behavior depends strongly on whether or not $(\D, \M)$ has spurious symmetries. In the presence of spurious symmetries, i.e.,  when either $\M$ is invariant to a symmetry group larger than $\Othree$ or when $\D$ is rotated by an element from such a group, there is no benefit from SGD, as we observe that $\ntk$ outperforms both NN and NN+. In the absence of spurious symmetries, i.e.,  when $\M\in \{\gapnn, \vecnn, \mathsf{LAP}^{4/8}\}$ and when the transformation applied to $\D$ is from a group no larger than $\Othree$, we observe a significant boost in performance when changing the inference algorithm from $\I=\ntk$ to $\I=\text{NN+}$, e.g.,  $77\%\to 84\% $ when $\gap\to\gapn$. The performance gain of $\vecnn$ is most significant ($67\% \to 78\%$ when $\vecinf\to\vecn$), which will be discussed in detail in the following section. Overall, our empirical results suggest that SGD does not improve the performance when $(\D, \M)$ has spurious symmetries, at least for CIFAR-10 without data augmentation. We argue that, when studying the benefits of feature learning, it is essential to take into account both the data distribution and the architecture choices.

\subsection{Symmetry breaking at finite width in $\vecn$.} 
Weight-sharing implies that $\vecn$ is $\Othree$-invariant; however, when $n\to\infty$, this symmetry group is enlarged to $\Othreed$ (see~\cref{thm: rotation}). This observation highlights a novel and previously underappreciated aspect of convolutional models with vectorization: symmetry breaking at finite width. As we will see, this broken symmetry can have a significant impact on performance. 

\begin{figure}[t]
\centering
\includegraphics[width=0.23\textwidth]{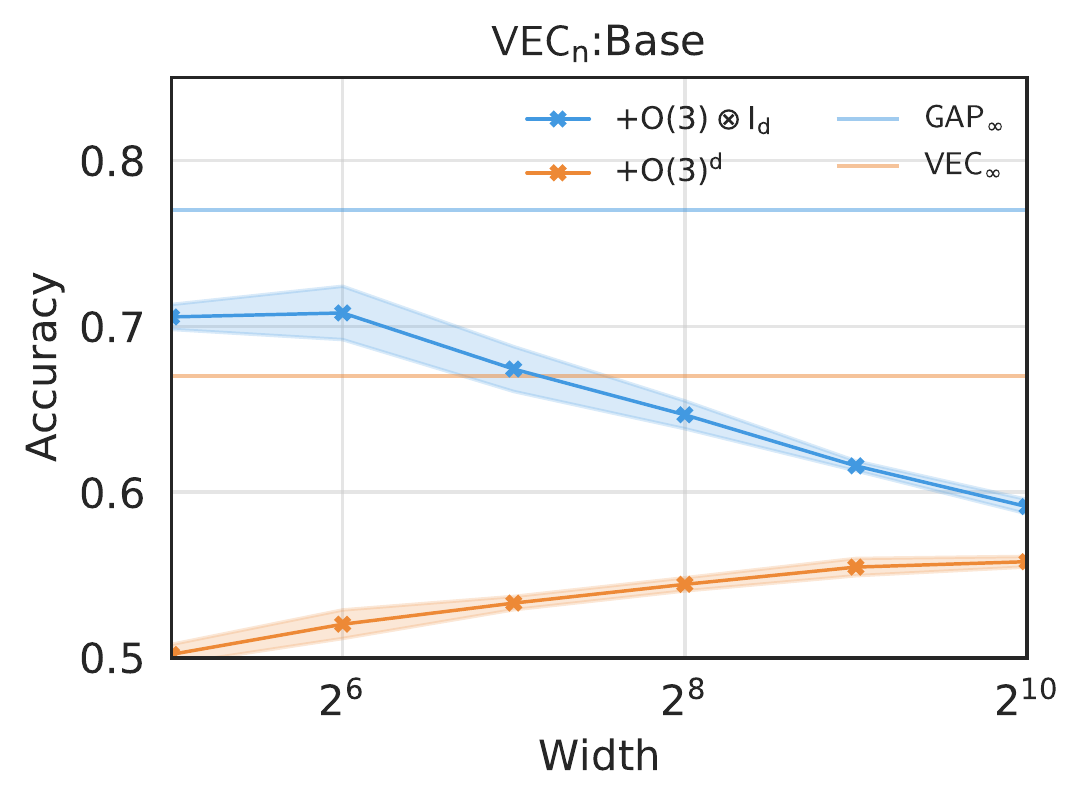}
\includegraphics[width=0.23\textwidth]{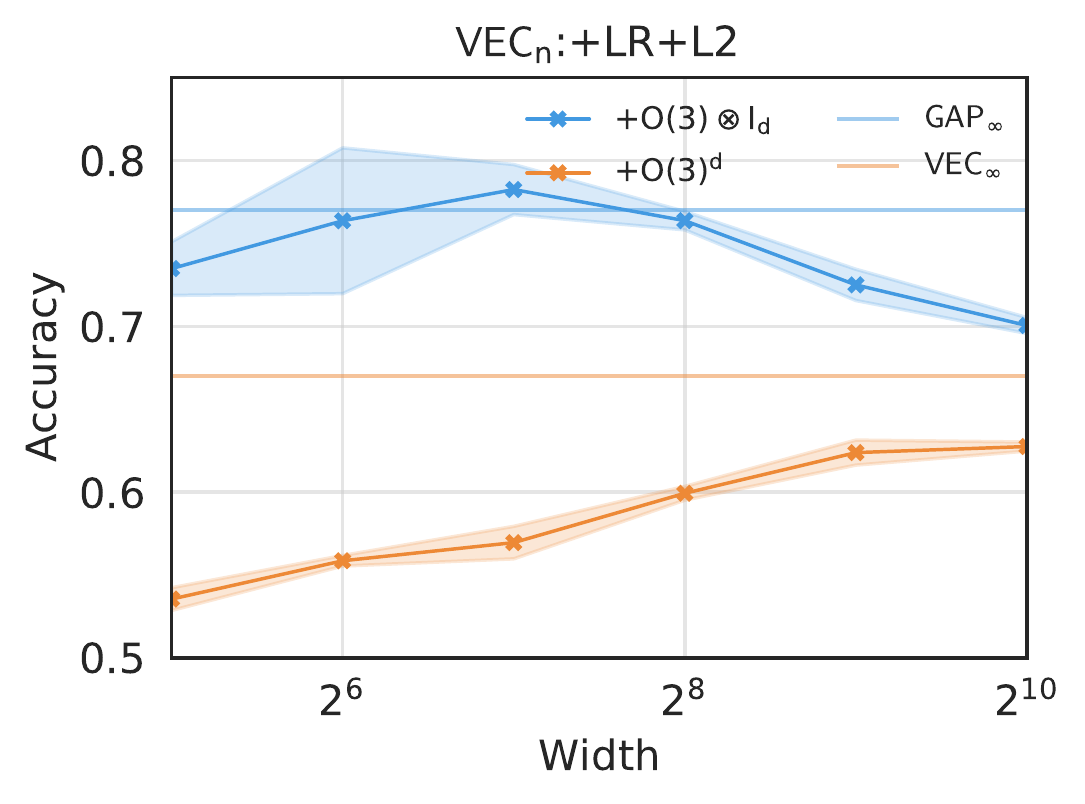}
\caption{{\bf Finite width Effect for $\vecn$.} Left: networks trained with a small learning rate and no L2-regularization. Right: with a larger learning rate and L2-regularization. With {\bf\color{plot_orange}$\Othreed$} imposed on the data, performance of $\vecn$ is far below the performance of $\vecinf$  (67\%, {\bf\color{plot_orange} orange horizontal line}), but improves monotonically as a function of $n$. However, with smaller rotation group {\bf \color{plot_blue} $\Othree$} (has no impact on $\vecn$), performance degrades substantially after the peak ({\bf \color{plot_blue} blue lines}).  This suggests a strong finite-width effect of $\vecn$ due to symmetry breaking.} 
    \label{fig:non-monotonic behavior of CNN-Vec}
\end{figure}

    In Fig.\ref{fig:non-monotonic behavior of CNN-Vec}, we plot the performance on CIFAR-10 of $\vecn$ as a function of width under the interventions: (1) rotating the data by $\Othree$ (which is actually a null operation with no impact) or by $\Othreed$, and (2) training under NN or NN+ (i.e.,  $\LR\REG$).  
    We summarize our findings below.

        {\bf Strong benefits of finite width ({\bf\color{plot_blue} Blue Lines}).} Both with and without the intervention of $\LR\REG$, the performance decreases monotonically and dramatically\footnote{This phenomenon was first observed in \cite{lee2020finite}.} towards or across the $\ntk$ performance ({\bf\color{plot_orange}{Faded Orange Lines}}) when $n$ passes a threshold ($n\approx2^6/2^7$ for NN/NN+).

         {\bf Spurious symmetry eliminates benefits of finite width.} When changing $\D\to \tau \D$ where $\tau\in\Othreed$, a spurious symmetry incompatible with weight-sharing convolution, modern wisdom in machine learning is restored: performance improves with overparameterization, gradually approaching $\ntk$ performance ({\bf\color{plot_orange} Orange Lines}). 
         
         {\bf Large fluctuations at small $n$} ({\bf \color{plot_blue} Blue shaded area} in Fig.\ref{fig:non-monotonic behavior of CNN-Vec}.)
        The intervention $\LR\REG$ not only improves the performance of $\vecn$ but also increases the variance of the performance substantially 
        \footnote{For each configuration, the standard deviation is computed over 5 different random initialization. Some of the runs fail and we only plot the configurations with at least 4 successful runs (best training accuracy is at least 95\%.)} for small width ($n=2^5$, $2^6$). For example, when $n=2^6$ the validation accuracy are 68\%, 78\%, 78\%, 79\%, 79\%. Such variability does not occur in the presence of spurious symmetries ($\Othreed$).

One interpretation of Thm.~\ref{thm:change of prior} and the above observations is that strong algorithmic forces are needed to help $\vecn$ escape from the undesirable function class $\lcnnn$ and move towards the more desirable properties of $\gapnn$. As discussed earlier, one mechanism behind such movement could be learning an approximation of pooling in the readout layer. 

\section{Data Improves Data Efficiency}\label{sec: dide}
In the previous section, we investigated the consistency of various machine learning systems through the lens of symmetries. In this section, we further investigate the interplay of the components of the \dmi triplet by conducting a fine-grained analysis of learning curves on various SoTA vision models. 

Recently, \citet{hoffmann2022training,kaplan2020scaling, bahri2021explaining} and other authors suggest that for real-world problems, the learning objective often has a power-law $\sim m^{-\alpha}$ dependence of training set size $m$, where the exponent $\alpha$ is a constant that usually independent from $m$. A surprising finding is that, for certain triplets \dmi, $\alpha$ can grow as $m$ becomes larger, i.e.,  data improves data efficiency (DIDE); see Fig.~\ref{fig:vec bends learning curve}. In what follows, we first examine the learning curve of $\vecn$ to better understand the huge performance gap between $\vecn$ and $\vecinf$. We then move to SoTA models, in which we observe a ``cusp'' in the learning curve. Finally, we provide possible explanations for the observed phenomena.   

\subsection{DIDE for $\vecn$}\label{sec:dide for vec} 
We vary the training set size of CIFAR-10 from $320$ to 45k (the whole un-augmented training set) and then to 90k (adding left-right flip augmentation) and plot the learning curves in Fig.~\ref{fig:vec bends learning curve} for various $(\D, \mathcal M, \mathcal I)$.
We observe a dramatic speedup of learning in our baseline setting $(\D, \mathcal M, \mathcal I) = (\text{CIFAR-10}, \vecn, \text{SGD})$ ({\bf\color{plot_blue} Blue Lines}). 
Pictorially, the slope of the learning curve is steepened substantially in the log-log plot. We then did an ablation study by changing one member in \dmi at a time: (1) {\bf Inference Algorithm $\I$ ({\bf \color{plot_orange} Orange Lines}):} SGD to NTK\footnote{This requires changing $\vecn$ to $\vecinf$} (2){\bf Model $\M$ ({\bf \color{plot_red} Red Lines}):} $\vecn\to\lcnn$, and (3) {\bf Data $\D$ ({\bf \color{plot_green} Green Lines}):} ${\D} \to \tau\D$, where $\tau\in\Othreed$ randomly selected. In all cases above, this phenomenon disappeared. 

\paragraph{Possible Explanation.} Recall from Theorem that  \ref{thm:change of prior}, the function class of $\vecn$ is sandwiched in-between $\gapn$ and $\lcnnn_{dn}$. In the small dataset regime, the algorithm is unable to move $\vecn$ far away from the $\lcnnn_{dn}$-like regime ($\Othreed$-invariance). This behavior is reflected from Fig.~\ref{fig:vec bends learning curve}: for $m<10^4$) the {\bf\color{plot_blue} Blue Lines} ($\vecn$) is very close to the {\bf\color{plot_green} Green Lines} ($\vecn$ with $\Othreed$ spurious symmetry applied to the data), and the slopes of all learning curves are comparable. With more data, $\vecn$ is able to break the $\Othreed$ symmetry and being to perform feature learning, which moves the model away from $\lcnnn$-like regime and towards $\gapn$. We provide further empirical support for this hypothesis in Sec.~\ref{sec:measureing-ssb} by showing that better performing $\vecn$-learners are closer to $\gapn$-learners and further from the $\vecinf$-learners, and vice versa. 

\begin{figure}[t]
    \centering
    \includegraphics[width=0.23\textwidth]{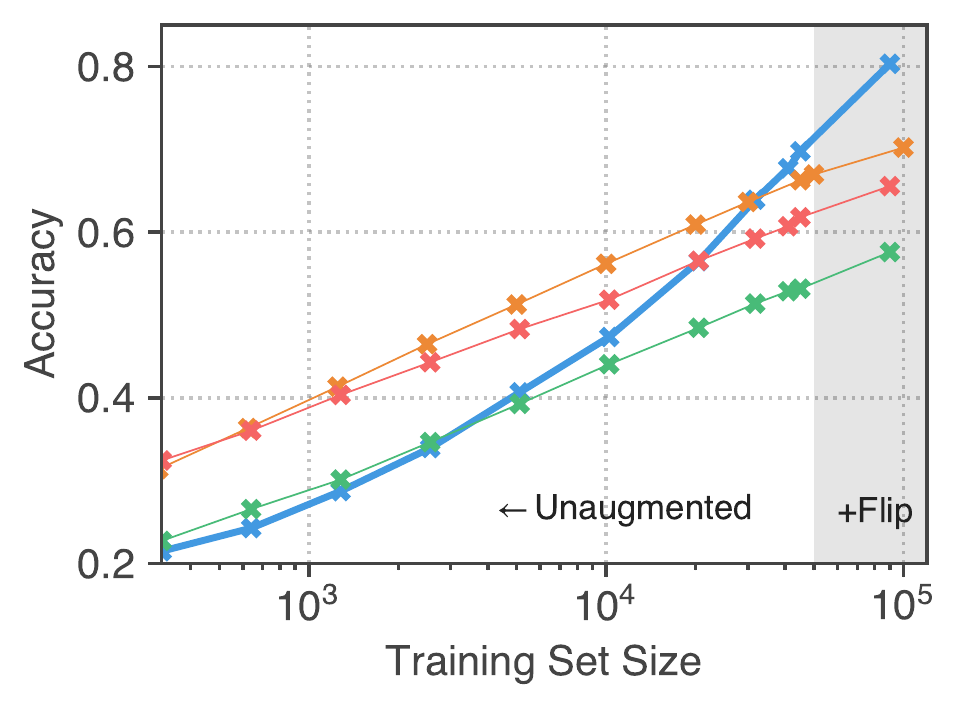}
    \includegraphics[width=0.23\textwidth]{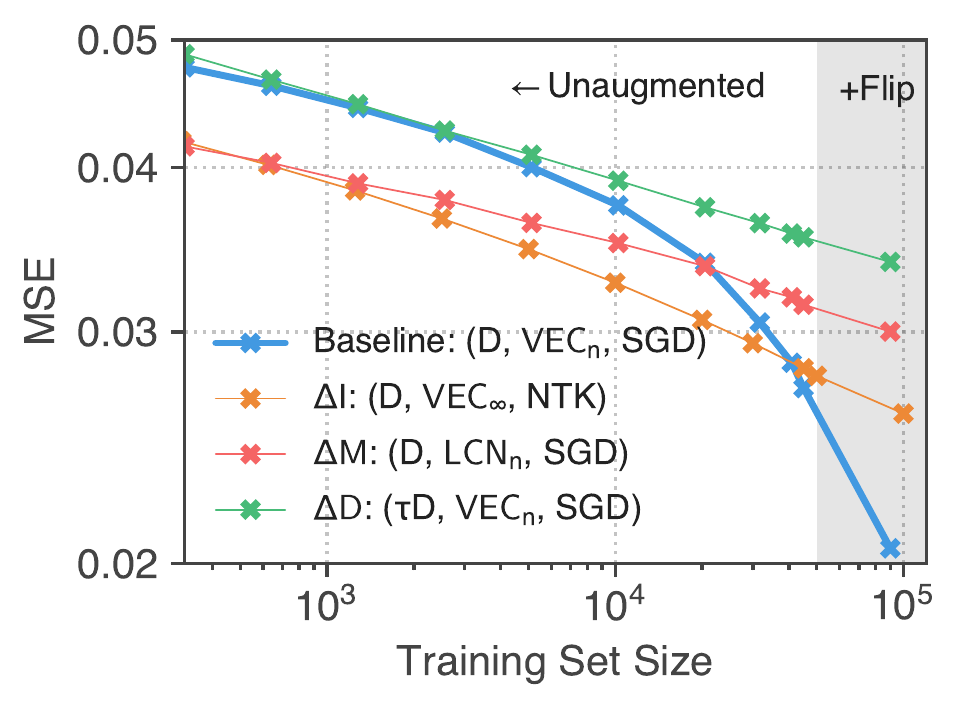}
        \caption{{\bf More Data Improves Learning Curve of $\vecn$.} 
high        Left: test accuracy. Right: MSE loss. The slope of the learning curve of the optimal baseline \dmi configuration ({\bf \color{plot_blue} blue}) increases significantly in the large-data regime. This data-improved efficiency disappears under interventions that suboptimally deform the data $\D$ ({\bf  \color{plot_green} green}), the model $\M$ ({\bf  \color{plot_red} red}), or the inference algorithm $\I$ ({\bf \color{plot_orange} orange}).}
            \label{fig:vec bends learning curve}
\end{figure}

\subsection{DIDE for ImageNet}
Larger deep learning systems can be exhibit qualitative differences from smaller ones. As such, we examine the DIDE phenomenon for SoTA models on ImageNet \citep{Deng2011Imagenet}, which has $m_{\text{ImageNet}}\sim 10^6$ training samples with size $224\times 224 \times 3$ ($=3d$). 

We subsample $m_i = 2^{-i/2}m_{\text{ImageNet}}$ images as our training set, where $i\in[12]$. 
In Fig.~\ref{fig:imagenet scaling}, we plot the learning curves for four \dmi triplets. We set $\M$ to be ResNet101 or a (small MLP-)Mixer \citep{tolstikhin2021mlp}, $\D$ to be the original (unrotated) ImageNet $\D_{\text{IN}}$ or a rotated version $\tau \D_{\text{IN}}$, for some $\tau\in\Othreed$. We keep $\I$ (SGD, see Sec.~\ref{sec:ImageNet experiment}) fixed. For each configuration, we interpolate the first/last six points (i.e.,  $i\geq6$/$i<6$) with straight lines (in the log-log plot) and compute the slope $\alpha$ (see legends in Fig.~\ref{fig:imagenet scaling}). We treat ResNet101 trained on clean images as our baseline, which is the most efficient and consistent \dmi system among the four. We observe the following. 
(1) Almost perfect power-law scaling for the baseline ({\bf \color{plot_blue}Blue Dashed Line}). (2) A cusp around $m_{i=6}$ for the remaining learning curves, which have two phases: relatively flat in the first phase and steepened in the second one. (3) Surprisingly, the slopes ($\alpha=0.49, 0.38$) of $(\text{ResNet101}, \tau\D_{\text{IN}}, \I)$ and $(\text{Mixer}, \D_{\text{IN}}, \I)$ in the second phase essentially catch up with that ($\alpha=0.41$) of the best one $(\text{ResNet101}, \D_{\text{IN}}, \I)$. 
These observations suggest that with more data the system can overcome the spurious symmetries $\Othreed$.  

Finally, to test the limit of deep learning systems in overcoming spurious $\Othreed$ symmetries, we further scale up $\M$ in the ResNet family \citep{he2016deep} and in the EfficientNet family \cite{tan2019efficientnet}. In the right panel of Fig.~\ref{fig:vec bends learning curve}, we make a scatter plot showing the accuracy of the $\Othreed$-rotated vs the original dataset. Each data point corresponds to one model.  
For the ResNet family, the top-1 accuracy gap between the rotated and the unrotated dataset drops from about 10\% (ResNet-18) to about 6\% (ResNet-200) and for the EfficientNet family, this gap drops from about 4\% (EfficientNet B0) to {\it only} about 1.\% (EfficientNet B7 \footnote{Trained by about 180 epochs.}), which is quite remarkable.  
\paragraph{Discussion of DIDE.} The change of the slopes of the learning curves suggests that the function classes on the left/right of the cusp might be qualitatively quite different. The cusp happens around $m_{i=6}\sim 2\times 10^5$, which is of the same order of $\dim(\Othreed)=3d=150528$ and $\dim(\text{O}(16^2\times 3))=294528$ (the size of the patches in the Mixer is (16, 16, 3)). This agreement suggests that to overcome spurious symmetry $\mathcal G$ (either from the models or data), at least $\sim\dim(\mathcal G)$ extra training points are needed. We also test the capability of the ResNet family in overcoming the $\text{P}(3d)$ (< $\Odthree$) symmetry (Sec.\ref{sec:learning-dyanmics-imagenet-rotation}), but the test and training accuracy remain below 35\% for all ResNet models. If we extrapolate the dimension counting argument, $\dim(\Odthree)\sim 10^{10}$ many training points may be needed to overcome the $\Odthree$ symmetries.

\begin{figure}
    \centering
    
    \includegraphics[width=0.23\textwidth]{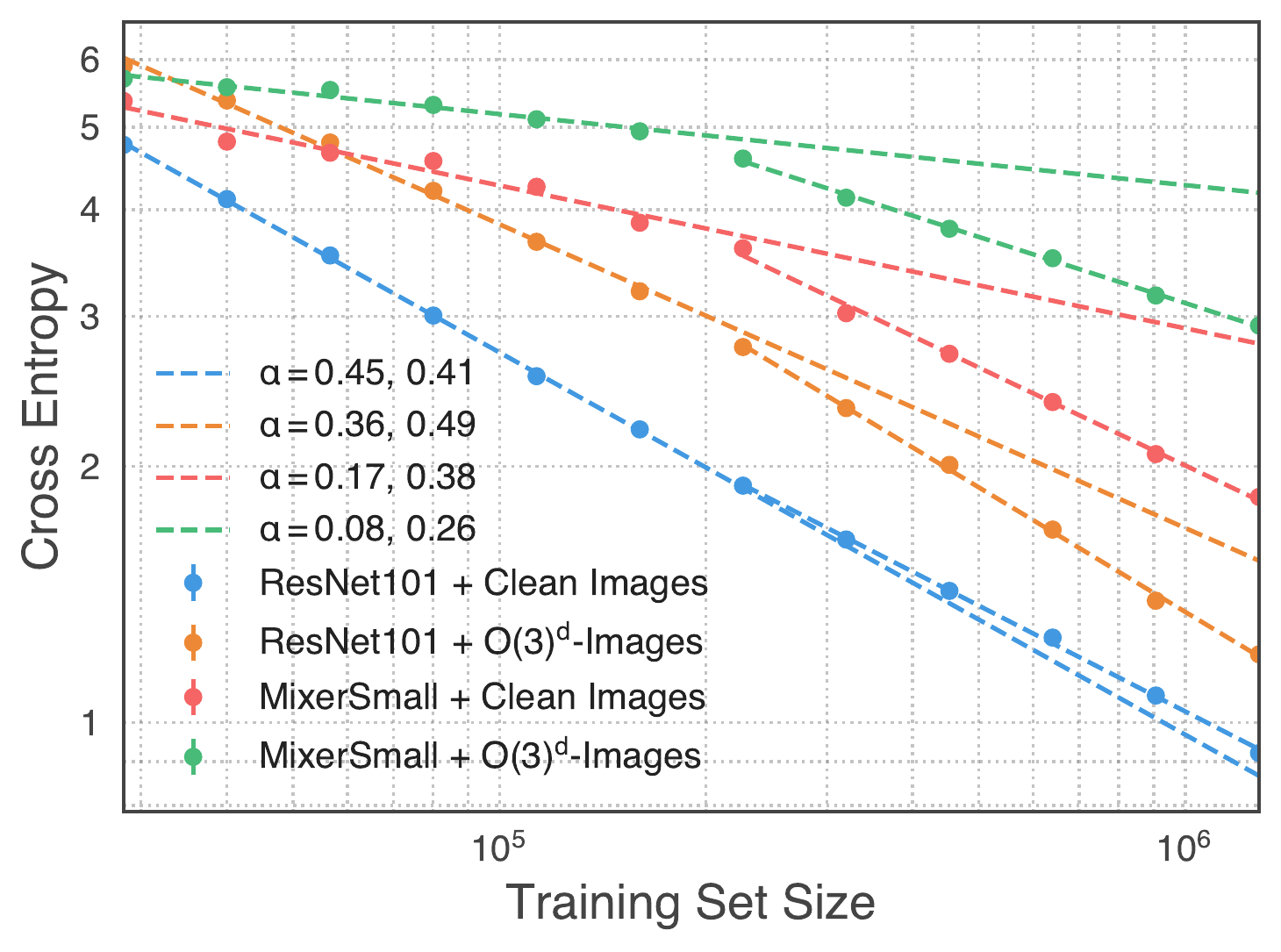}
    \includegraphics[width=0.23\textwidth]{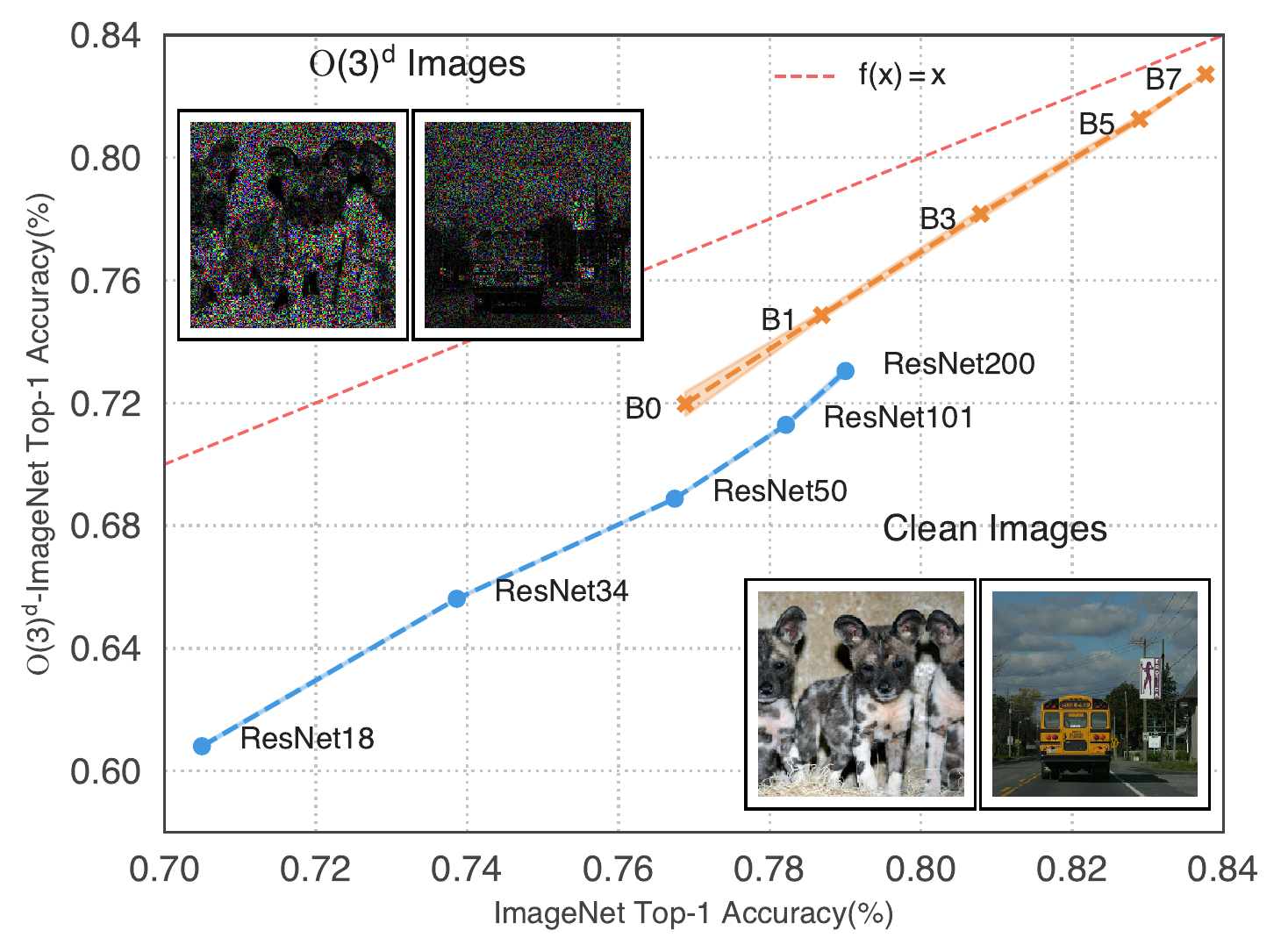}
    \caption{{\bf SoTA models overcome $\Othreed$ spurious symmetries.}
    Left: learning curves for four \dmi triplets. There is a cusp in all learning curves except the optimal configuration ({\bf \color{plot_blue} Blue curve.}) Learning efficiency significantly improves for those curves after the cusp. 
    Right: Top-1 accuracy for the ResNet and the EfficientNet families when the dataset is $\Othreed$-rotated ($x$-axis) and unrotated ($y$-axis.) SoTA models can overcome the $\Othreed$ spurious symmetries. 
    }
    \label{fig:imagenet scaling}
\end{figure}

\section{Conclusion}
We consider machine learning methods as an integrated system of data, models, and inference algorithms and study the basic symmetries of various machine learning systems \dmi. We examine the relation between the performance and the consistency of the triplet \dmi through the lens of symmetries. We find that learning is most efficient when the learning algorithm is consistent with the data distribution. Finally, we observe that, for many triplets \dmi, the slopes of the scaling law curves can improve with more data, suggesting the function class is transitioning to a new one that is dramatically more data-efficient than the one obtained from initialization. Theoretical characterization of how this transition occurs might be a crucial step to the understanding of feature learning in neural networks.

\section*{Acknowledgetment.}
We thank Jascha Sohl-dickstein, Sam Schoenholz, Jaehoon Lee, Roman Novak, and Yasaman Bahri for insightful discussion, and Sam Schoenholz, Jaehoon Lee, Roman Novak, and Mingxing Tang for engineering support. 
We are also grateful to Roman Novak and the anonymous reviewers for feedback and suggestions on an earlier draft of this work.



\bibliographystyle{icml2022}

\newpage
\appendix
\onecolumn

\normalsize
\onecolumn
\clearpage
\appendix

\begin{center}
\textbf{\large Supplementary Material}
\end{center}

\setcounter{equation}{0}
\setcounter{figure}{0}
\setcounter{table}{0}
\setcounter{page}{1}
\setcounter{section}{0}

\renewcommand{\theequation}{S\arabic{equation}}
\renewcommand{\thefigure}{S\arabic{figure}}
\renewcommand{\thetable}{S\arabic{table}}

\section{Glossary}
We use the following abbreviations in this work:

\begin{itemize}
    \item $\REG$: Adding L2 regularization.  
    \item $\LR$: Using a large learning rate. 
    \item $\fcnn$: Fully-connected networks with width $n$. 
    \item $\fcn$: Infinite width $\fcnn$. 
    \item $\vecn$: Convnet with width $n$ and a flattening readout layer. 
    \item $\vecinf$: Infinite width $\vecn$. 
    \item $\lcnn$: Locally-connected network with width $n$. 
    \item $\lcninf$: Infinite width $\lcnn$, which is the samme as  $\vecinf$. 
    \item $\gapn$: Convnet with width $n$ and a global average readout layer. 
    \item $\gap$: Infinite width $\gapn$. 
    \item $\lapn^k$: Similar to $\gapn$, except the readout layer is a $(k, k)$ average pooling. 
    \item $\lapinf^k$: Infinite width $\lapn^k$. 
\end{itemize}
\section{Experimental details}\label{sec:experimental details}
We use the Neural Tangents (NT) library \cite{neuraltangents2020} 
built on top of JAX \cite{jax2018github}
for all CIFAR10 experiments, and the ImageNet codebase from FLAX\footnote{\href{https://github.com/google/flax/blob/main/examples/imagenet/README.md}{https://github.com/google/flax/blob/main/examples/imagenet/README.md}}\citep{flax2020github} for ResNets and Mixer experiments on ImageNet.  

\subsection{Cifar10 Experiments}
The experimental setup is almost the same as in \cite{lee2020finite}. 
\paragraph{Architectures.} For $\fcnn$, $\lcnn$, $\vecn$ and $\gapn$, the number of hidden layers are 8 and the widths (number of channels) are $n=1024$, $32$, $128$ and $128$, resp. For all CIFAR10 experiments, we only use Relu as the activation function. We use NTK parameterization and the variances of initialization are chosen to be $\sigma_\omega^2=2$ and $\sigma_b^2=0.01$ for the weights and biases, resp.  

\paragraph{Training Details.} 
We use MSE as our loss function, which is defined to be 
\begin{align}
    L(\theta; D_{\text{mini}}) = \frac{1}{2\times K|D_{\text{mini}}|} \sum_{(x_i, y_i)\in D_{\text{mini}}} | f_{\theta}(x_i) - y_i|^2 + \frac {\lambda}{2}\|\theta\|_2^2
\end{align}
where $D_{\text{mini}}$ is a mini-batch with $|D_{\text{mini}}|=40$ and $K=10$ is the number of classes. The regularization is set to be $\lambda=10^{-7}$ if $L^2$ regularization ($\REG$) is applied otherwise 0. SGD + Momentum ($\text{mass}=0.9$) is used for all experiments. The learning rate is set to be $\eta = c\eta_0$, where $c=8$ if using a larger learning rate ($\LR$) and 1 otherwise. Here $\eta_0$\footnote{We use \href{https://github.com/google/neural-tangents/blob/main/neural_tangents/predict.py}{max\_learning\_rate}
function from the Neural Tangents library to estimate $\eta_0$.} is estimated by $\frac 2 {\lambda_{max}}$, where $\lambda_{max}$ is the largest eigenvalue of the finite-width NTK (estimated by MC sampling). With $c=8$, we are about a factor of $2\sim 4$ smaller than the maximal feasible learning rate observed empirically.

We use 45k images as the training set and reserve the remaining 5k as the validation set. All finite width experiments are initially trained for at least $10^6$ steps (about $900$ epochs), but will be early-stopped if the training accuracy reaches 100\% with fewer steps. Among the successful runs (with training accuracy $\geq 95\%$), we pick the highest test accuracy along training and average them over 5 random runs (if all runs succeed.)  

\subsection{ImageNet Experiments}\label{sec:ImageNet experiment}
We use the ImageNet codebase from FLAX\footnote{\href{https://github.com/google/flax/blob/main/examples/imagenet/README.md}{https://github.com/google/flax/blob/main/examples/imagenet/README.md}}\citep{flax2020github} for our ResNets experiments. 
We adopt most of the training configurations except change the number of epochs to 150. Note that we also rotate (see Sec.~\ref{sec:data-transformation.}) and/or subset the dataset when needed.  

The model for the Mixer experiments on ImageNet is adopted from Sec.E in \cite{tolstikhin2021mlp}. The training configurations are identical to that of the ResNets above. The EfficientNet family models are trained using Tensorflow \citep{abadi2016tensorflow}, which are adopted from \href{https://github.com/tensorflow/tpu/tree/master/models/official/efficientnet}{https://github.com/tensorflow/tpu/tree/master/models/official/efficientnet}. 

\subsection{Data Transformation.}\label{sec:data-transformation.}
Let $\D= \{(x_i, y_i), i\in [m_{\text{train}}+ m_{\text{test}}]\}$ be the data set (e.g. ImageNet, CIFAR10.) Here $x_i\in\mathbb R^{3d}$ is a flattened input image, and $m_{\text{train}}$ and $m_{\text{test}}$ are the numbers of images in the training set (including data-augmentation) and in the test set, resp.
\paragraph{$\Odthree$ Transformation.} Randomly sample $Q\in \Odthree$, a $3d\times 3d$ orthogonal matrix.  The transformed dataset $\tau D$ is 
\begin{align}
    \tau D := \{(Qx_i, y_i): i \in [m_{\text{train}}+ m_{\text{test}}] \}.
\end{align}
The $P(3d)$ (permutation) transformation is defined similarly. 
\paragraph{$\Othreed$ Transformation.} We reshape each $x_i\in\mathbb R^{3d}$ into $x_i\in\mathbb R^{H\times W\times 3}$ where $H$ and $W$ are the height and width of the images (e.g. $H=W=32$, $d=32^2$ for CIFAR10 and $H=W=224$, $d=224^2$ for ImageNet). Independently sample $HW$ many $3\times 3$ random orthogonal matrix $(Q_{h,w})_{h\in [H], w\in [W]}$. The transformed dataset $\tau D$ is
\begin{align}
    \tau D := \left\{\left(\left(Q_{h, w}x_{i; h, w}\right)_{h\in [H], w\in [W]}, y_i\right): i \in [m_{\text{train}}+ m_{\text{test}}] \right\}.
\end{align}
\paragraph{$\Othree$ Transformation.} In this case, we sample only {\it one} $3\times 3$ orthogonal matrix and the transformed dataset is defined to be 
\begin{align}
    \tau D := \left\{\left(\left(Qx_{i; h, w}\right)_{h\in [H], w\in [W]}, y_i\right): i \in [m_{\text{train}}+ m_{\text{test}}] \right\}.
\end{align}
Note that the same rotation matrix $Q$ is applied to all pixels regardless of their spatial locations. 
\paragraph{}

\section{Proof of Theorem \ref{thm:change of prior}} 
\label{sec:proof of inclusion thm}
We use ${\mathsf {FCN} }_{n}$ to denote the class of functions that can be expressed by $L$-hidden layer fully-connected networks whose widths are equal to $n$. Similar notation applies to other architectures.     
\begin{corollary} We have the following  
\begin{align}
\gapn \subseteq {\mathsf { VEC} }_n \subseteq  {\mathsf {LCN} }_{n}
\subseteq {\mathsf {VEC} }_{dn},\quad  {\mathsf {LCN} }_{n} \subseteq 
{\mathsf {FCN} }_{dn} 
\end{align}
\end{corollary}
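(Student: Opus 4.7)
The strategy is to prove each inclusion by exhibiting, for every function realizable by parameters of the smaller model, an explicit parameter setting of the larger model that realizes the same function. Since the claim concerns sets of functions, any overall prefactor difference can be absorbed into the weights.

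The first two inclusions and the FCN embedding are weight-tying observations. For $\gapn \subseteq \vecn$, set $w^{L+1,\mathsf{VEC}}_{\alpha,ij} := w^{L+1,\mathsf{GAP}}_{ij}/\sqrt{d}$ independent of $\alpha$ and copy every hidden weight verbatim; the $1/\sqrt{dn}$ prefactor of VEC then reproduces GAP's $1/(d\sqrt{n})$ prefactor after summing over the $d$ spatial positions. For $\vecn \subseteq \lcnn$, the LCN parameterization already subsumes the shared VEC kernel: take $\omega^{l+1,\mathsf{LCN}}_{ij,\alpha,\beta} := \omega^{l+1,\mathsf{VEC}}_{ij,\beta}$ independent of $\alpha$, and keep the common vectorized readout. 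For $\lcnn \subseteq \fcnnn_{dn}$, flatten the $d \times n$ hidden tensor into a vector of length $dn$; the LCN receptive-field constraint becomes a sparsity pattern on the $dn \times dn$ FCN weight matrix, whose non-zero entries are rescaled by $\sqrt{d/(2k+1)}$ to align the $1/\sqrt{dn}$ FCN prefactor with the $1/\sqrt{(2k+1)n}$ LCN prefactor.

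The crux is the middle inclusion $\lcnn \subseteq \vecnn_{dn}$, which requires emulating position-dependent (unshared) convolutions by spatially shared ones. The plan is to use position-tagged channels: index the $dn$ VEC channels as pairs $(a,i)$ with $a \in [d]$ a tag and $i \in [n]$ the original LCN channel, and maintain the diagonal invariant
\begin{equation*}
x^{l,\mathsf{VEC}}_{a,\,(a,i)} \;=\; x^{l,\mathsf{LCN}}_{a,i}, \qquad a \in [d],\ i \in [n],\ l = 1, \dots, L,
\end{equation*}
while permitting off-diagonal entries ($\alpha \neq a$) to be arbitrary. The base case $l=1$ is immediate from $\omega^{1,\mathsf{VEC}}_{c(a',j),\beta} := \omega^{1,\mathsf{LCN}}_{cj,a',\beta}$, since evaluating the shared convolution at $\alpha = a'$ reproduces the unshared LCN convolution at output position $a'$. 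For the inductive step, choose
\begin{equation*}
\omega^{l+1,\mathsf{VEC}}_{(a',i)(a'',j),\beta} \;:=\; \sqrt{d}\;\mathbbm{1}[a' = a'' + \beta]\;\omega^{l+1,\mathsf{LCN}}_{ij,\,a'',\,\beta};
\end{equation*}
the indicator restricts every surviving contribution to the diagonal output $h^{l+1,\mathsf{VEC}}_{a'',\,(a'',j)}$ to come from the input at position $a''+\beta$ in channel $(a''+\beta,i)$, which by the inductive hypothesis equals $x^{l,\mathsf{LCN}}_{a''+\beta,i}$, while the $\sqrt{d}$ collapses VEC's $1/\sqrt{(2k+1)dn}$ prefactor to LCN's $1/\sqrt{(2k+1)n}$. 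Off-diagonal entries never feed back to diagonal ones and do not require control; consequently the argument does not rely on any property of the pointwise activation $\phi$ beyond channel-wise application. Finally, the vectorized readout $w^{L+1,\mathsf{VEC}}_{\alpha,(a,i),j} := \sqrt{d}\;\mathbbm{1}[\alpha = a]\;w^{L+1,\mathsf{LCN}}_{a,ij}$ selects the diagonal, collapsing the double sum over $(\alpha,a)$ to the correct LCN readout and discarding the off-diagonal garbage. The main obstacle, and what I expect to be the non-obvious step, is the realization that one should not try to simulate LCN throughout the entire VEC tensor (which is impossible with strictly shared weights), but instead only on the diagonal sub-tensor, which is algebraically sealed off by the indicator structure of the kernel and projected out by the readout.
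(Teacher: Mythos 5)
Your proof is correct and follows essentially the same route as the paper: the easy inclusions via weight tying, and the key step $\lcnn\subseteq\vecnn_{dn}$ via position-tagged channels (your pair $(a,i)$ is the paper's channel index $\alpha n+i$), the diagonal invariant propagated by induction with indicator-structured weights carrying a $\sqrt d$ rescaling, and a diagonal-selecting readout. Your explicit remark that off-diagonal entries never feed back into the diagonal is a point the paper leaves implicit, but the construction is the same.
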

\begin{proof}
We only need to prove ${\mathsf {LCN} }_{n}
\subseteq {\mathsf {VEC} }_{dn}$ because the others are obvious. 
 Let $\lcnn(x)_{\alpha, i}^l$ denote the post-activation at layer $l$, spatial location $\alpha$ and channel index $i$ of a $\lcnn$ with input $x$ and $\vecn(x)_{\alpha, i}^l$ is defined similarly. It suffices to prove that for any LCN with width $n$ there is a VEC with width $dn$ such that for any $l\geq 1$ (i.e. not the input layer)
\begin{align}
\label{eq:lcn-vec-induction2}
{\mathsf {VEC} }_{dn}(x)_{\alpha, \alpha n + i}^l    = \lcnn(x)_{\alpha, i}^l
\end{align}
since we could choose the readout weights of ${\mathsf {VEC} }_{dn}$ at locations $(\alpha, \alpha n + i)$ to match the one of $\lcnn$ at locaton $(\alpha, i)$ and zero out the remaining entries. 
We prove this by induction and assume it holds for $l$ (the base case $l=1$ is obvious). Then the $\lcnn$ and $\vecn$ at layer $l+1$ can be written as 
\begin{align*}
\lcnn(x)^{l+1}_{\alpha, j} = \phi\left( \frac 1 {\sqrt{n(2k+1)}}\sum_{i\in [n], \beta\in[-k, k]}  \lcnn(x)^{l}_{\alpha +\beta, i} \omega^{l+1}_{\beta, ij}(\alpha) \right) 
\end{align*}
and 
\begin{align*}
{\mathsf {VEC} }_{dn} (x)^{l+1}_{\alpha, j} = \phi\left( \frac 1 {\sqrt{dn(2k+1)}}\sum_{i\in [dn], \beta\in[-k, k]}  {\mathsf {VEC} }_{dn} (x)^{l}_{\alpha+\beta, i} \tilde \omega^{l+1}_{\beta, ij} \right) 
\end{align*}
One can show that \eqref{eq:lcn-vec-induction2} holds for $(l+1)$ by choosing the parameters of ${\mathsf {VEC} }_{dn} $ as follows
\begin{align*}
    \tilde \omega^{l+1}_{\beta, ij}   = \sqrt d \omega^{l+1}_{\beta, i -(\alpha+\beta)n, j -\alpha n}
    \quad \text{if} \quad \alpha n \leq j < \alpha(n+1) \quad \text{and} \quad (\alpha + \beta) n \leq i < (\alpha + \beta) (n+1)
\end{align*}
and $0$ otherwise.

\end{proof}
\section{Proof of Symmetries}
\label{sec:proof of symmetries}
\begin{proof}
    For simplicity, we present the proof for full-batch training. The proof can be applied to mini-batch training as long as the order of the mini-batch is fixed. 
    Let $\tau$ be a rotation in $\Odthree$ or $\Othreed$ or $\Othree$, depending on the architectures ($\fcnn,\lcnn, \vecn, \gapn$) and the tuple $\theta$ and $\gamma$ denote the parameters of the first and remaining layers of the network, respectively. Let $h(\tau x, \theta) = \langle \tau x, \theta\rangle$ denote the pre-activations of the first-hidden layer in the rotated coordinate. Here $\langle \cdot , \cdot \rangle $ is the bilinear map (a dense layer or a convolutional layer with or without weight-sharing, etc.), not the inner product. The loss with $L^2$-regularization is 
    \begin{align}
        R_{\lambda}(\theta, \gamma) &= L(h(\tau \X, \theta), \gamma) + \frac 1 2 \lambda (\left \|\theta\|^2_2 +\|\gamma\|_2^2\right) 
    \end{align}
    where 
    $L(h(\tau \X, \theta), \gamma)$ is the raw loss of the network. For each random instantiation $\theta=\theta_0$ with $\theta_0$ drawn from standard Gaussian iid, 
    we instantiate a coupled network from the un-rotated coordinates but with a different instantiation in the first layer $\theta^{\tau} = \tau^* \theta_0$ and keep the remaining layers unchanged, i.e. $\gamma^{\tau} = \gamma_0$. Here $\tau^*$ is the adjoint of $\tau$ and note that $\tau^* \theta_0$ and $\theta_0$ have the same distribution by the Gaussian initialization of $\theta_0$ and the definition of $\tau$. The regularized loss associated to this instantiation is 
    \begin{align}
                R_{\lambda}(\theta^{\tau}, \gamma^{\tau}) &= L(h(\X, \theta^\tau), \gamma^\tau) + \frac 1 2 \lambda( \left \|\theta^\tau\|^2_2 +\|\gamma^\tau\|_2^2\right) 
    \end{align}
    It suffices to prove that for each instantiation $\theta=\theta_0$ drawn from Gaussian, the following holds for all gradient steps $t$
    \begin{align}
        (\theta^{\tau}_t, \gamma^{\tau}_t) = (\tau^{*} \theta_t,  \gamma_t).  
    \end{align}
    We prove this by induction on $t$ and $t=0$ is true by definition. Assume it holds when $t=t$. Now the update in $\gamma$ and $\gamma^{\tau}$ with learning rate $\eta$ are  
    \begin{align}
        \gamma_{t+1} = \gamma_t -\eta \left(\left.\frac {\partial L}{\partial \gamma}\right\vert_{(h(\tau \X, \theta_t), \gamma_t)}\right)^T  -\eta \lambda \gamma_t 
        \\
         \gamma_{t+1}^{\tau} = \gamma_t^{\tau} -\eta \left(\left.\frac {\partial L}{\partial \gamma}\right\vert_{(h(\X, \theta_t^{\tau}), \gamma_t^{\tau})}\right)^T - \eta\lambda \gamma_t^{\tau}
    \end{align}
    It is clear $ \gamma_{t+1} =  \gamma_{t+1}^{\tau} $ by induction since $h(\tau \X, \theta_t) = h(\X, \theta^{\tau}_t)$. Similarly, 
     \begin{align}
        \theta_{t+1} = \theta_t -\eta \left(\frac {\partial L}{\partial h}
        \left.\frac {\partial h}{\partial \theta}\right\vert_{ (\tau \X, \theta_t))}\right) ^T - \lambda \theta_t 
        \\
         \theta_{t+1}^{\tau} = \theta^{\tau}_t -\eta \left(\frac {\partial L}{\partial h}
         \left.\frac {\partial h} {\partial \theta^{\tau}}\right\vert_{(\X, \theta_t^{\tau})}\right)^T - \lambda \theta_t^{\tau}
    \end{align}
    Note that by the chain rule and induction assumption 
    \begin{align}
        \left.\frac {\partial h} {\partial \theta^{\tau}}\right\vert_{(\X, \theta_t^{\tau}) }
         =  \left.\frac {\partial h}{\partial \theta}\right\vert_{(\X, \theta_t^{\tau})}\frac {\partial \theta^{\tau}}{ \partial \theta}
         =
        \left.\frac {\partial h}{\partial \theta}\right\vert_{(\X, \theta_t^{\tau})}\tau 
    \end{align}
    This implies $\theta_{t+1}^{\tau} = \tau^* \theta_{t+1}$. 
    
    \end{proof} 
    \paragraph{Remark S1.} It is not difficult to see that the proof applies to the Non-Gaussian i.i.d. initialization (e.g. uniform distribution) and/or adding $L^p$-regularization when the rotation groups are replaced by the corresponding permutation groups. Empirically, we observe that replacing the first layer Gaussian initialization by uniform distribution does not change the performance of the network much. See Fig.\ref{fig:uniform vs gaussian.}. 
    
    \paragraph{Remark S2.} The proof works for other parameterization methods, including NTK-parameterization\citep{jacot2018neural}, standard parameterization \citep{sohl2020infinite}, mean-field parameterization\citep{song2018mean} and ABC-parameterization \citep{yang2020feature}

\section{Measuring the Effect of Symmetry Breaking of $\vecn$.}\label{sec:measureing-ssb}

The discussion in the main text suggests that breaking the $\Othreed$ symmetry, making the network to exploit the smaller symmetry group $\Othree$ might be important to good performance of $\vecn$. To measure the effect of symmetry breaking and the reliance of $\vecn$ on the $\Othree$ symmetry, we compare the distance of $\vecn$ to $\vecinf$ ($\Othreed$ invariant) and to $\gapn$ ($\Othree$ invariant). More precisely, for two learning algorithm $\mathcal A_1$ and $\mathcal A_2$ trained on $\D_T$, we defined the square distance between them to be 
\begin{align}
    \text{S-Dist}(\mathcal A_1, \mathcal A_2) = \mathbb E_{x\sim \X}|\mathcal A_1(\D_T)(x) - \mathcal A_2(\D_T)(x)|^2 \, .
\end{align}
If the learning algorithm $\mathcal A_i$ is stochastic, then we use the mean prediction in the above definition. E.g, if $\mathcal A_1$ depends on the initialization $\theta_0$ which is a random variable and $\mathcal A_2$ is deterministic, then we define the squared distance to be  
\begin{align}
    \text{S-Dist}(\mathcal A_1, \mathcal A_2) = \mathbb E_{x\sim \X}|\E_{\theta_0}\mathcal A_1(\D_T; \theta_0)(x) - \mathcal A_2(\D_T)(x)|^2 \, .
\end{align}
Using this definition, we can measure the discrepancy between two rotated systems $(\tau_1\D, \M, \I)$ and $(\tau_2\D, \M, \I)$ by computing $\text{S-Dist}(\mathcal A^{\tau_1}, \mathcal A^{\tau_2})$, where $\mathcal A= (\M, \I)$ and $\tau_{1/2}$ are coordinate transformations. Note that if the system is strictly $\mathcal G$ invariant, then $\text{S-Dist}(\mathcal A^{\tau_1}, \mathcal A^{\tau_2})=0$ for all $\tau_{1/2}\in\mathcal G$. 

We use the exponential map to construct a continuous path\footnote{More precisely, the path lies in $\text {SO}(3)^d\subseteq \Othreed$.} from the identity operator $\Id$ to a random element in $\Othreed$.
More precisely, we randomly sample $d$ $3\times 3$ skew-symmetric matrices $ A = (A_0, \dots, A_{d-1}) \subseteq (\mathbb R^{3\times 3})^d$ and define $\tau = \text {exp}(-{ A})$ and $\tau_t =  \text {exp}(-{t A})$ for $t\in [0, 1]$. Then $(\tau_t)_{t\in[0, 1]}\subseteq \Othreed$ is a continuous path from $\Id$ to $\tau$. We then construct new datasets $\tau_t \D$ (see Fig.~\ref{fig:image-interpolation} for a sample of the continuously rotated images) and study the behavior of the corresponding systems 
\begin{align}
    \{(\tau_t \D, \M, \I): t\in [0, 1]\}
\end{align}

We vary the width $n=64$ to $n=512$ dyadically and $t$ from $[0, 1]$ with equal distance and train the networks on CIFAR10 as in the NN+ setting ($\LR\REG$). Finally, we average the predictions of the learned network over 10 random initialization as an approximation of $\E\vecn^{\tau_t}(x)$ and etc. We summarize the observation below.
\begin{itemize}
    \item As $n$ and/or $t$ increases, the (ensemble) test performance decays monotonically (left panel in Fig.\ref{fig:symmetry breaking}). This is because increasing $n$ and/or $t$ discourages $\vecn$ to utilize the smaller symmetry group $\Othree$. 
    \item As a function of $n$ or $t$, 
    $\text{S-Dist}(\vecn^{\tau_t}, \gapn)$ increases monotonically (middle panel) while $\text{S-Dist}(\vecn^{\tau_t}, \vecinf)$ decreases monotonically (right panel). Thus, small $n$ moves the $\vecn$ learner towards the $\gapn$ learner while increasing $n$ and/or the strength of rotation moves it away from the $\gapn$ learner and towards the $\vecinf$ learner.

\end{itemize}

\begin{figure*}[t]
\centering
\includegraphics[width=1.\textwidth]{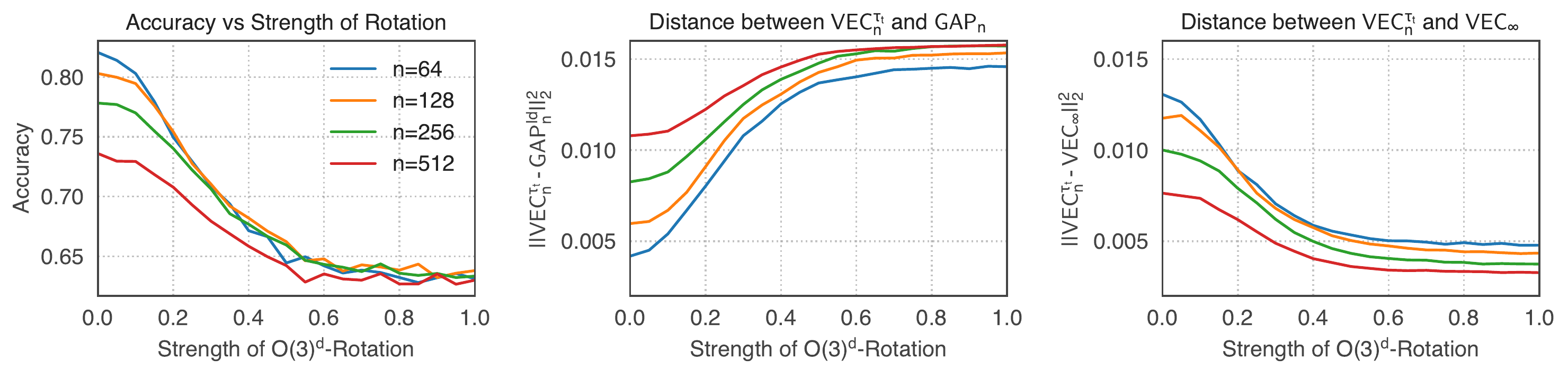}
\caption{{\bf High performing $\vecn$-learner is closer to $\gapn$-learner and far away from $\vecinf$-learner, and vice versa.} {Even in the NN+ setting, $\vecn$ is closer to $\gapn$ for small $n$ and moves towards $\vecinf$ with stronger symmetry and/or larger $n$ and accuracy drops.}}
\label{fig:symmetry breaking}
\end{figure*}    

\begin{figure}
    \centering
    \includegraphics[width=.6\textwidth]{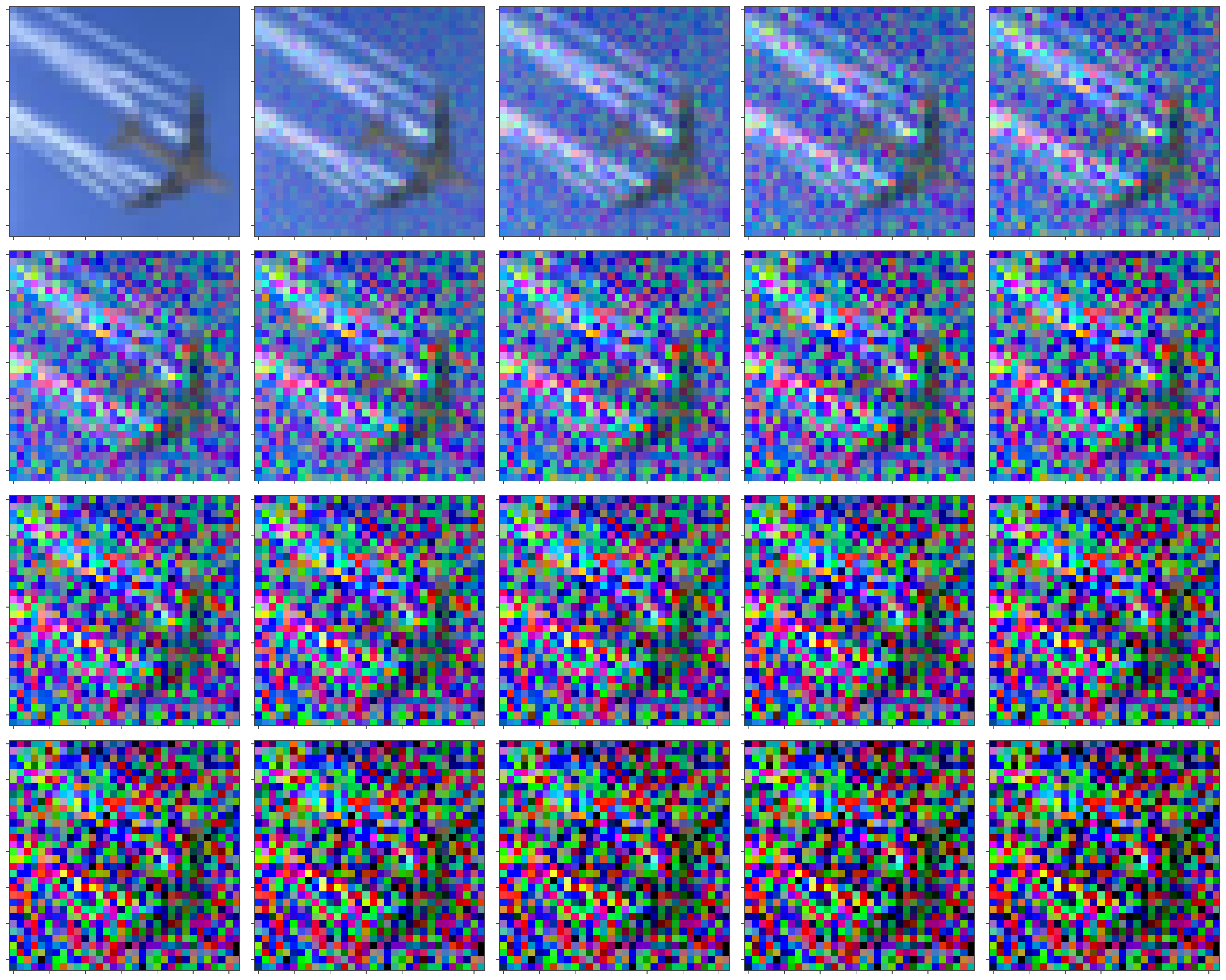}
    \caption{{\bf Continuous deformation from a clean image to the $\Othreed$-rotated image.}} 
    \label{fig:image-interpolation}
\end{figure}

\newpage 
\section{Plots Dump}

\subsection{Scaling Plots for ResNet34 and ResNet101}
\begin{figure}[h]
\centering
          \begin{subfigure}[b]{.32\textwidth}
         \centering
          \includegraphics[width=\columnwidth]{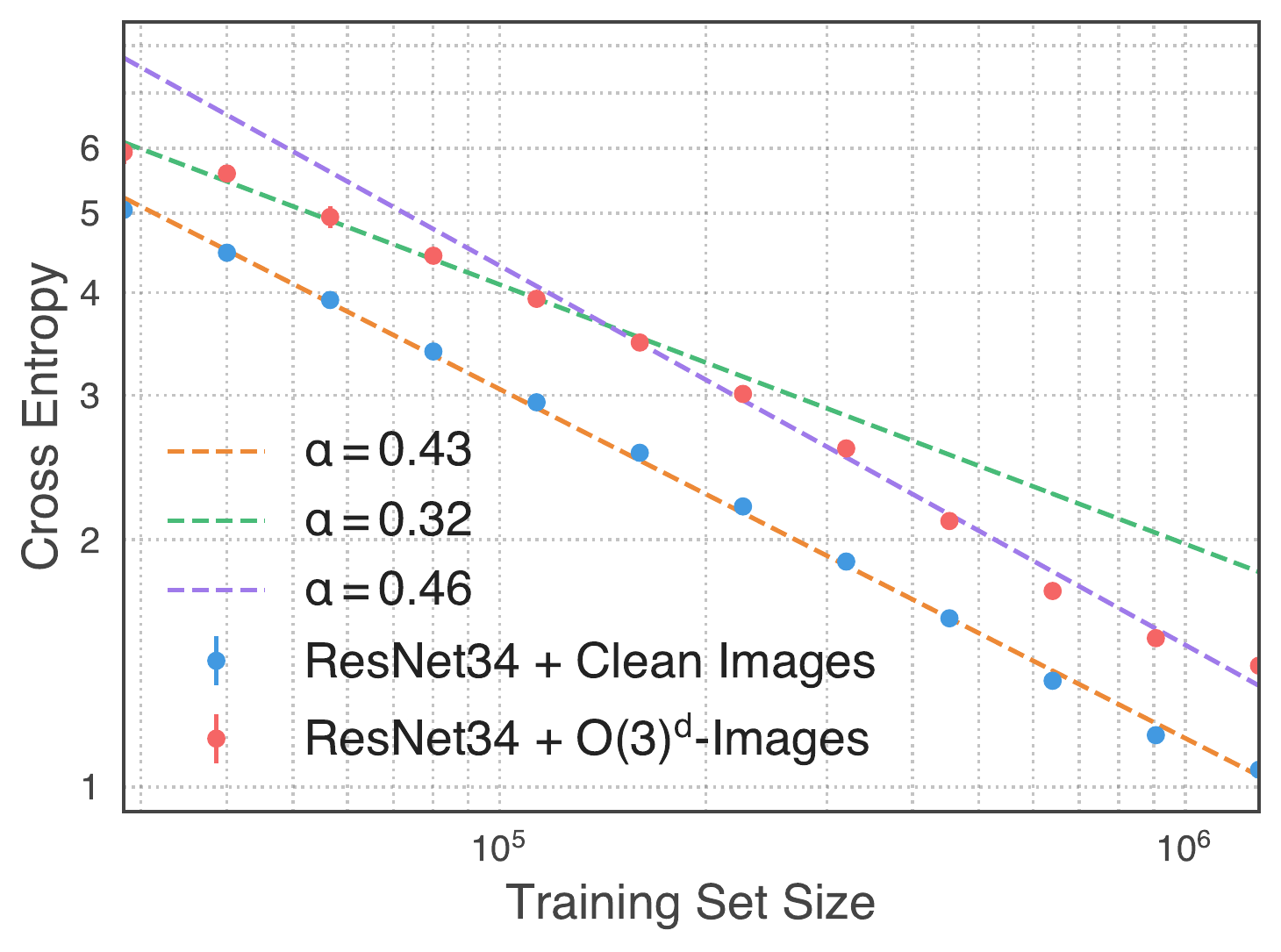}
     \end{subfigure}
              \begin{subfigure}[b]{.32\textwidth}
         \centering
          \includegraphics[width=\columnwidth]{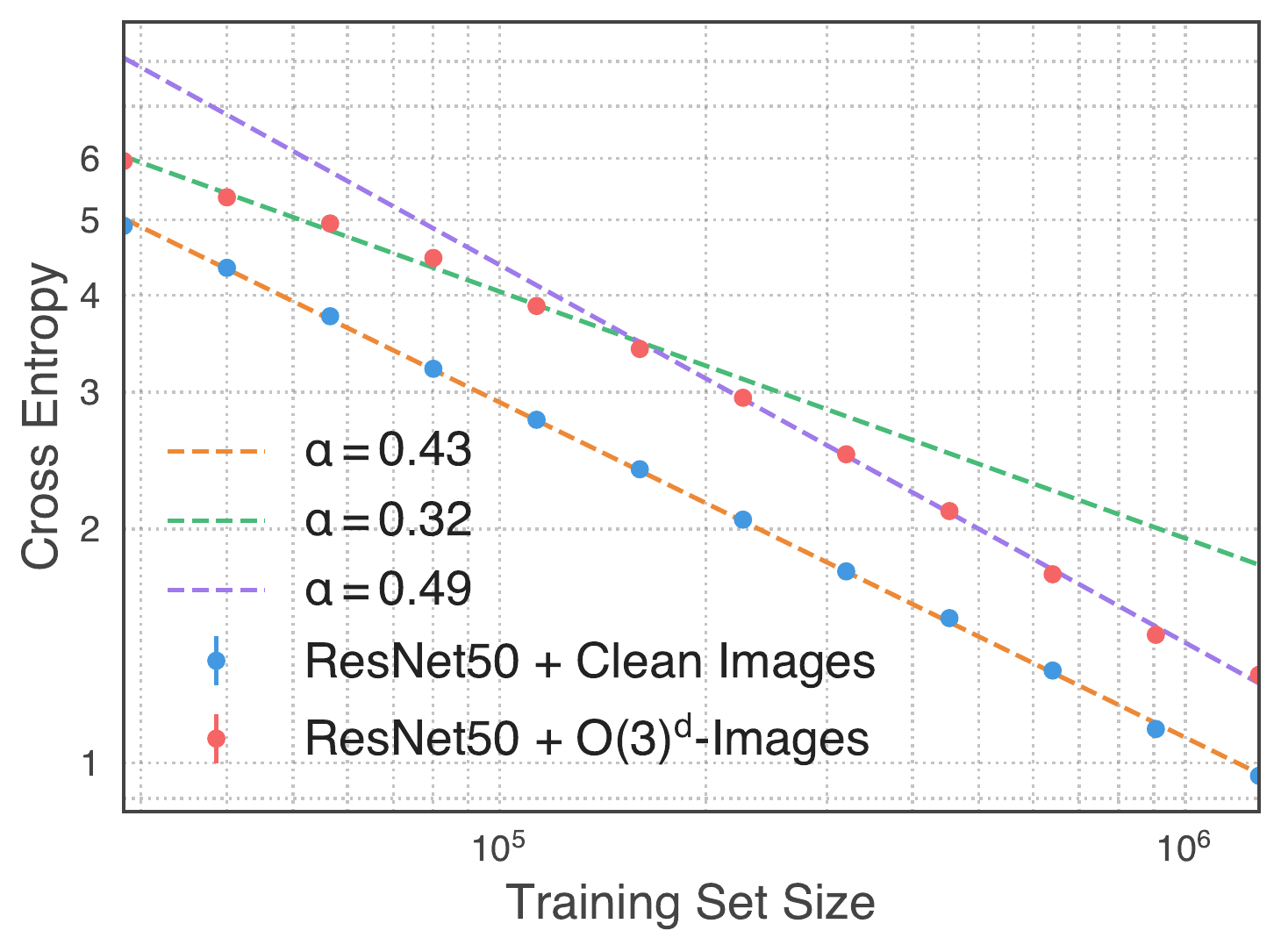}
     \end{subfigure}
     \begin{subfigure}[b]{.32\textwidth}
         \centering
          \includegraphics[width=\columnwidth]{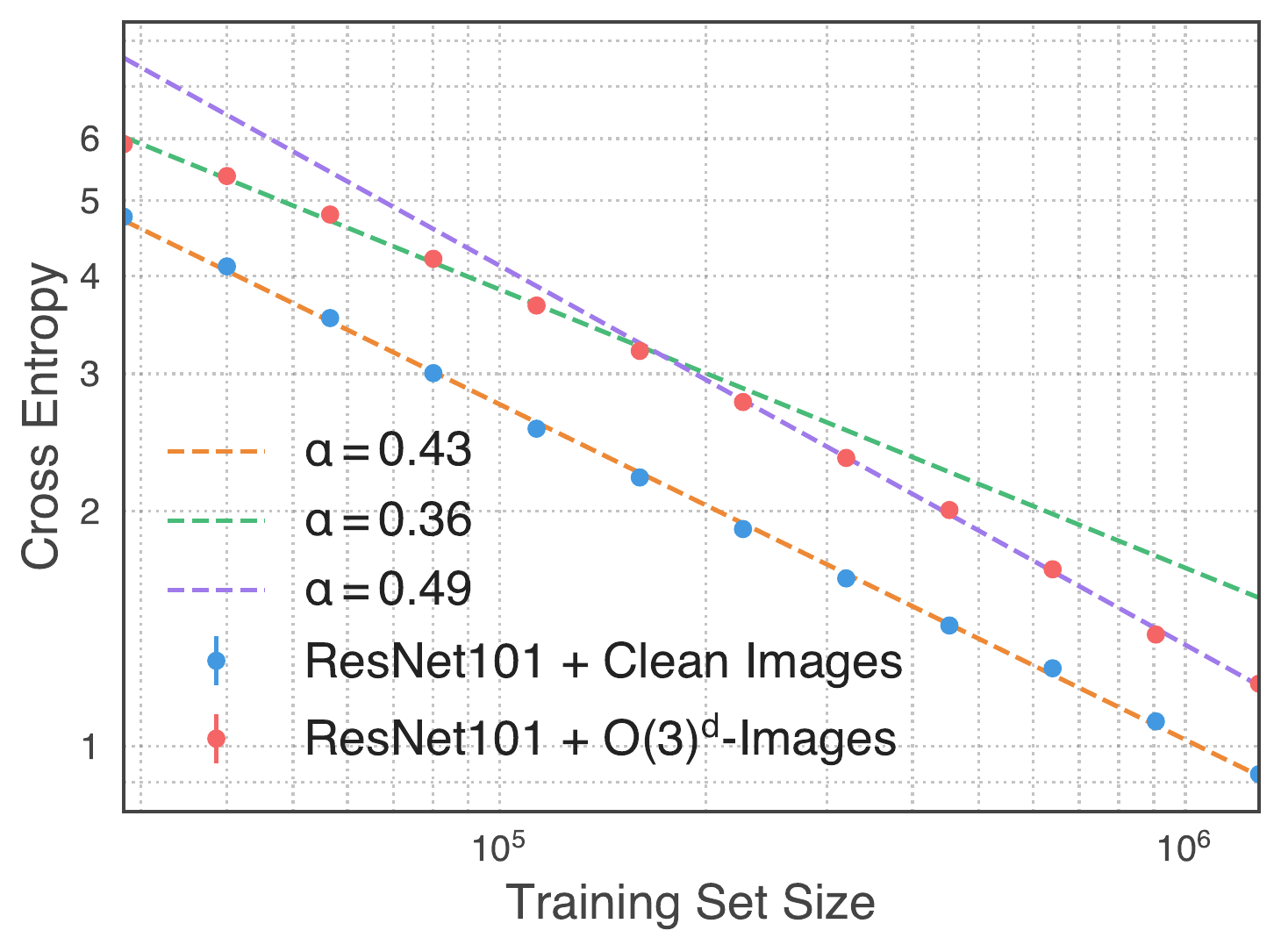}
     \end{subfigure}
 \caption{{\bf Scaling vs Rotation.} Left to right: ResNet34, ResNet50 and ResNet101.} 
    \label{fig:resnet 101}
\end{figure}
\subsection{Learning dynamics of ImageNet for various Rotations}\label{sec:learning-dyanmics-imagenet-rotation}
\begin{figure}[h]
    \centering
     \includegraphics[width=0.3\textwidth]{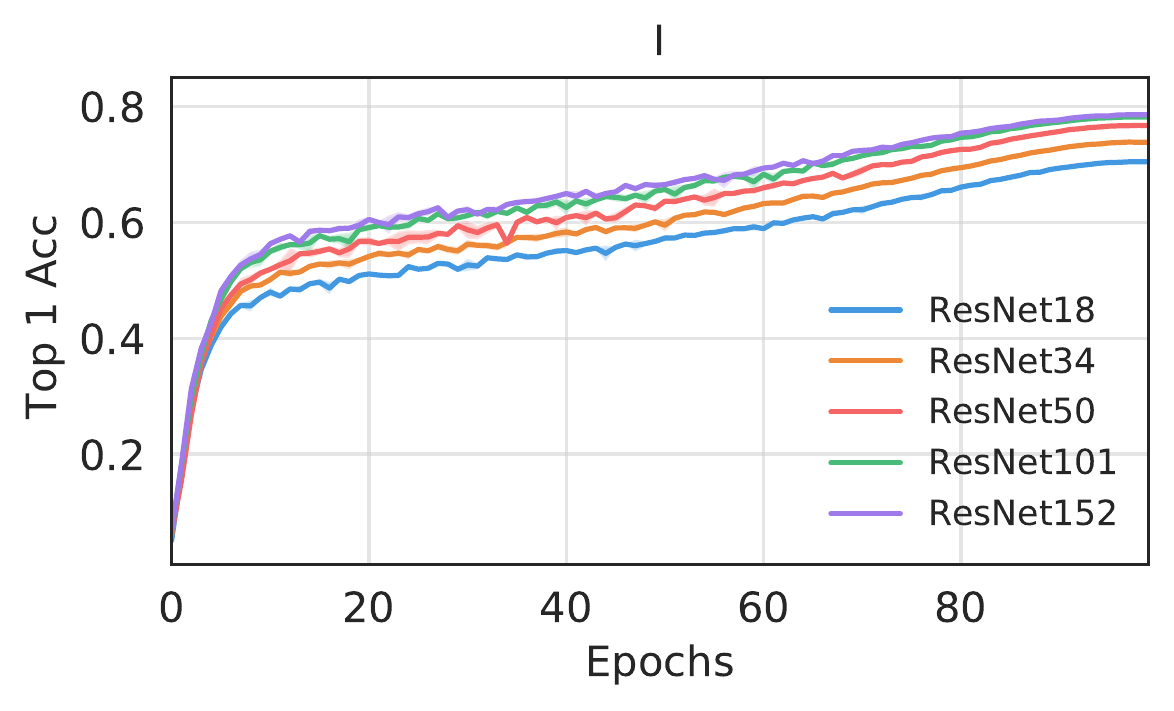}
    \includegraphics[width=0.3\textwidth]{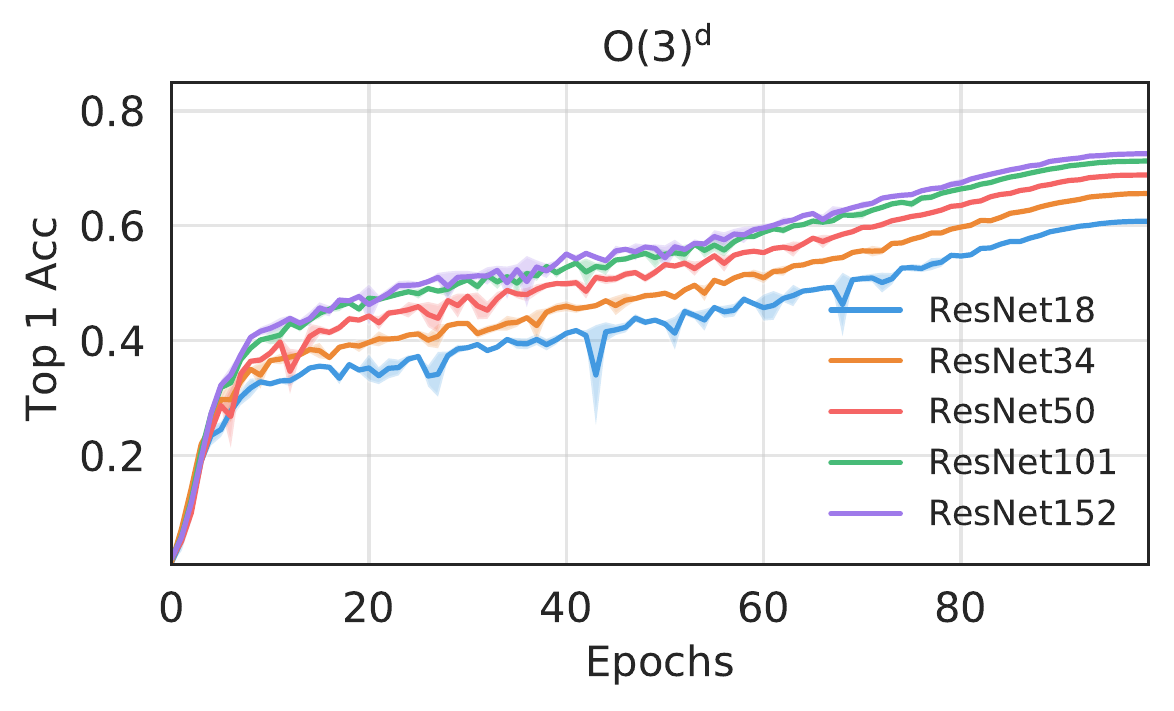}
        \includegraphics[width=0.3\textwidth]{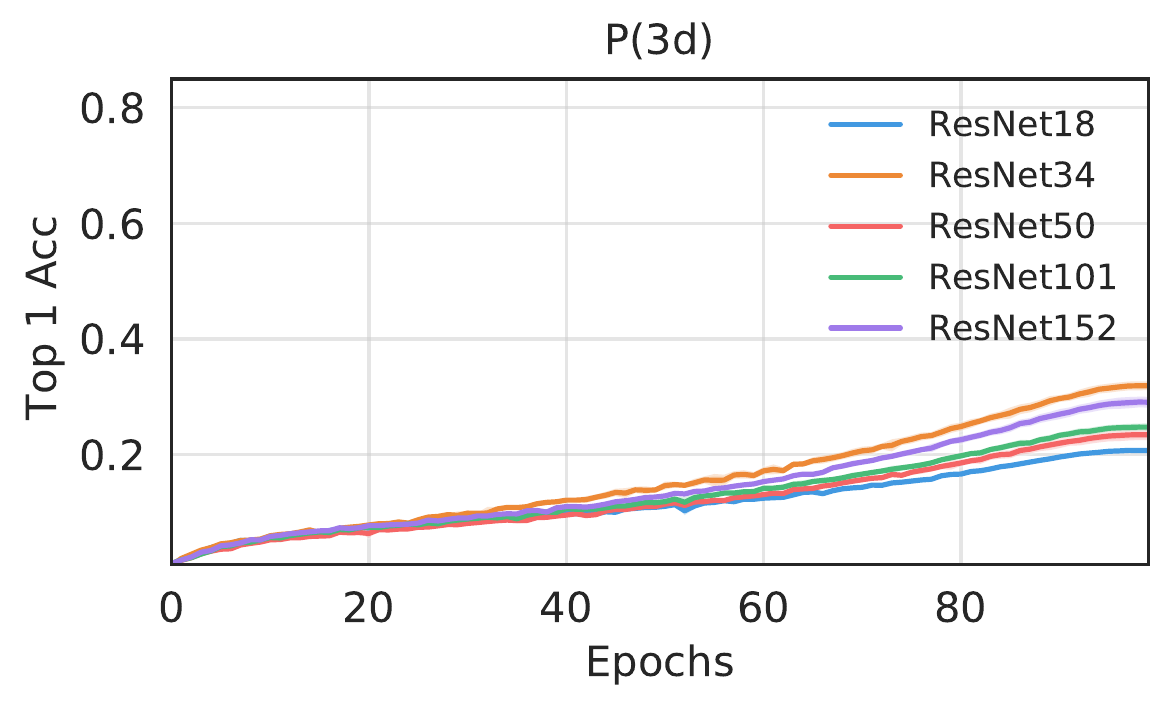}
    \caption{{\bf ResNet can overcome the spurious symmetries $\Othreed$ (middle) but not $\text{P}(3d)$ symmetries (right.)} Learning dynamics (test accuracy vs epochs) of rotated ImageNets. Averaged over 3 runs. Left: no rotation. Middle: $\Othreed$ rotation. Right: $\text{P}(3d)$ rotation.
    When the dataset is $\text{O}(3)^d$-rotated, the models are still able to obtain decent performance, which monotonically improves as the model becomes larger. However, 
    when the dataset is $\text{P}(3d)$-rotated, test accuracy is below 35\% and there isn't a clear trend that larger models can be better. }
    \label{fig:rotation-dynamics}
\end{figure}

\label{sec:plots dump}
\subsection{Gaussian vs Uniform Initializations}
We compare initializing the first layer of the networks using iid Gaussian vs iid Uniform distribution. We observe that the difference is very small; see Fig.~\ref{fig:uniform vs gaussian.}  
    \begin{figure}[h]
    \centering
    \includegraphics[width=.6\textwidth]{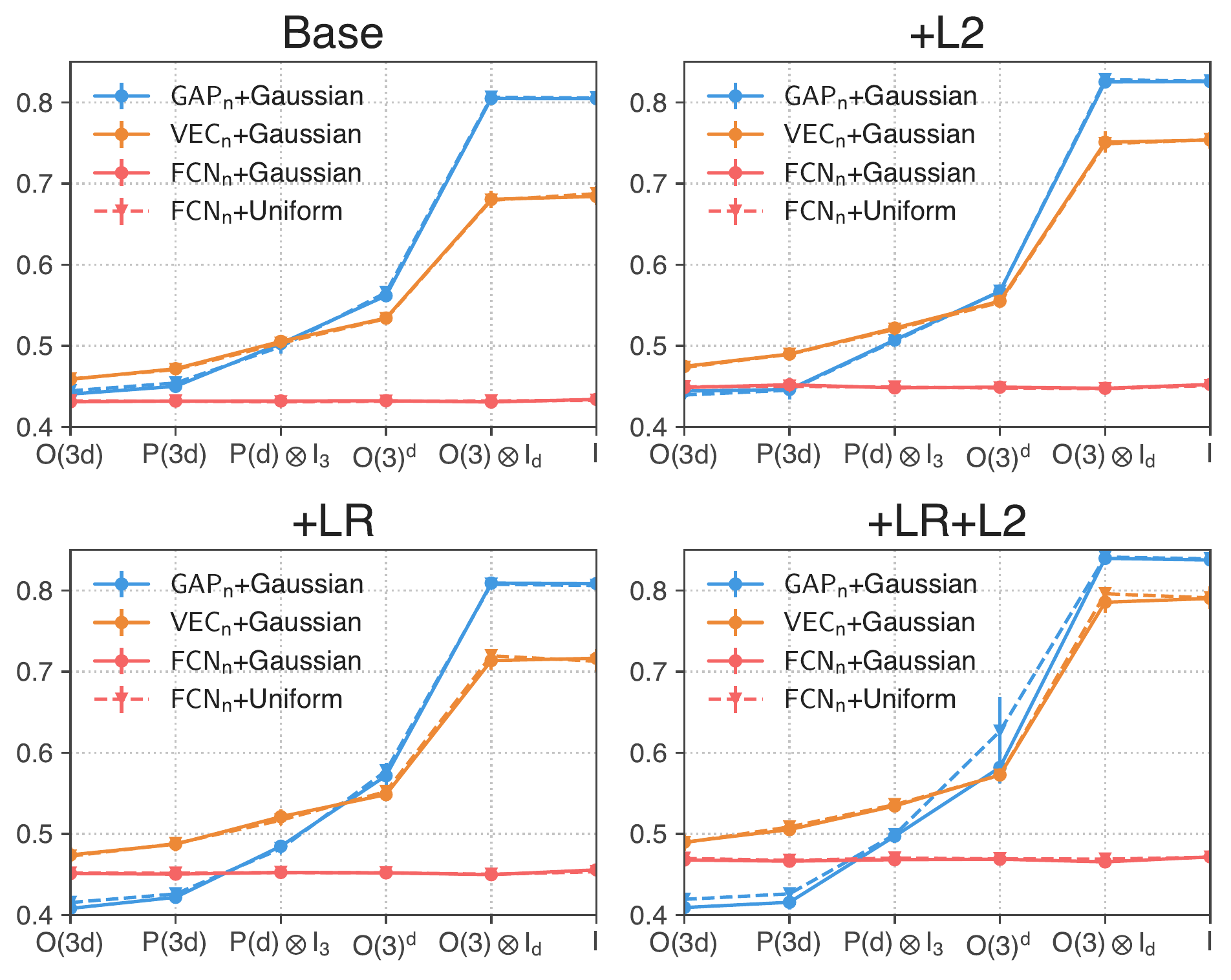}
    \caption{{\bf{Replacing the Gaussian initialization by uniform distribution does not change the performance much.}}}
    \label{fig:uniform vs gaussian.}
\end{figure}

\subsection{Scaling Law for Infinite Networks}\label{sub: scaling nngp ntk}
    \begin{figure}
    \centering
    \includegraphics[width=1.\textwidth]{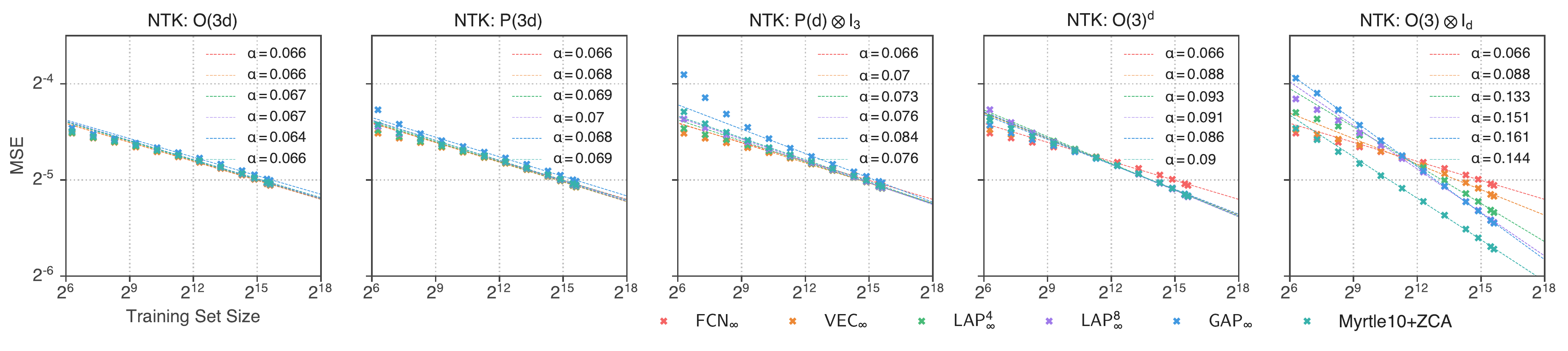}
        \includegraphics[width=1.\textwidth]{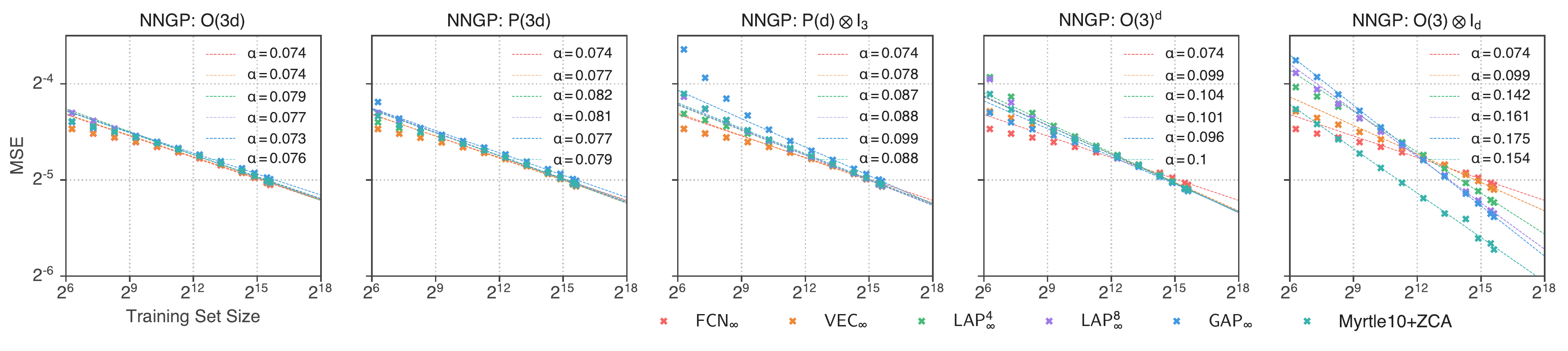}
            \includegraphics[width=1.\textwidth]{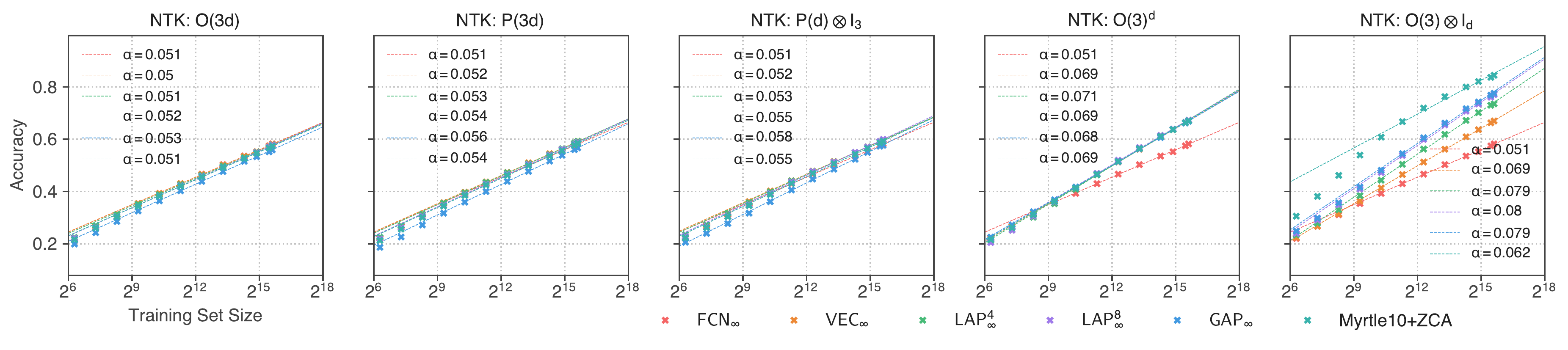}
                \includegraphics[width=1.\textwidth]{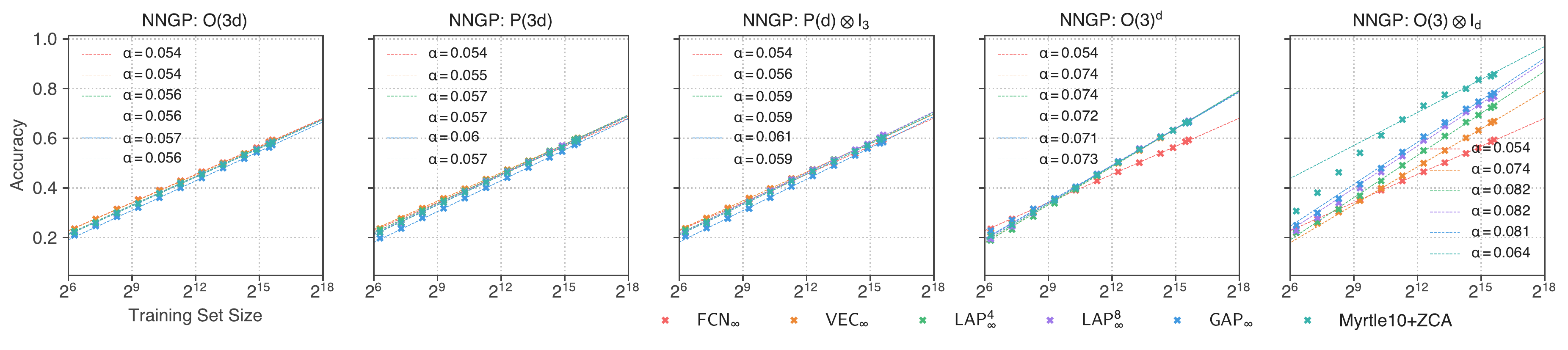}
    \caption{{\bf Scaling Law of Infinite Network vs Different Symmetries vs Architectures.} We see clean power-law lines for most of the learning curves in the MSE plots. The exponents are largely dictated by the spurious symmetries. }
    \label{fig:eigenvalue}
\end{figure}

\subsection{ImageNet Samples}
\begin{figure}[t]
\centering
          \includegraphics[width=\columnwidth]{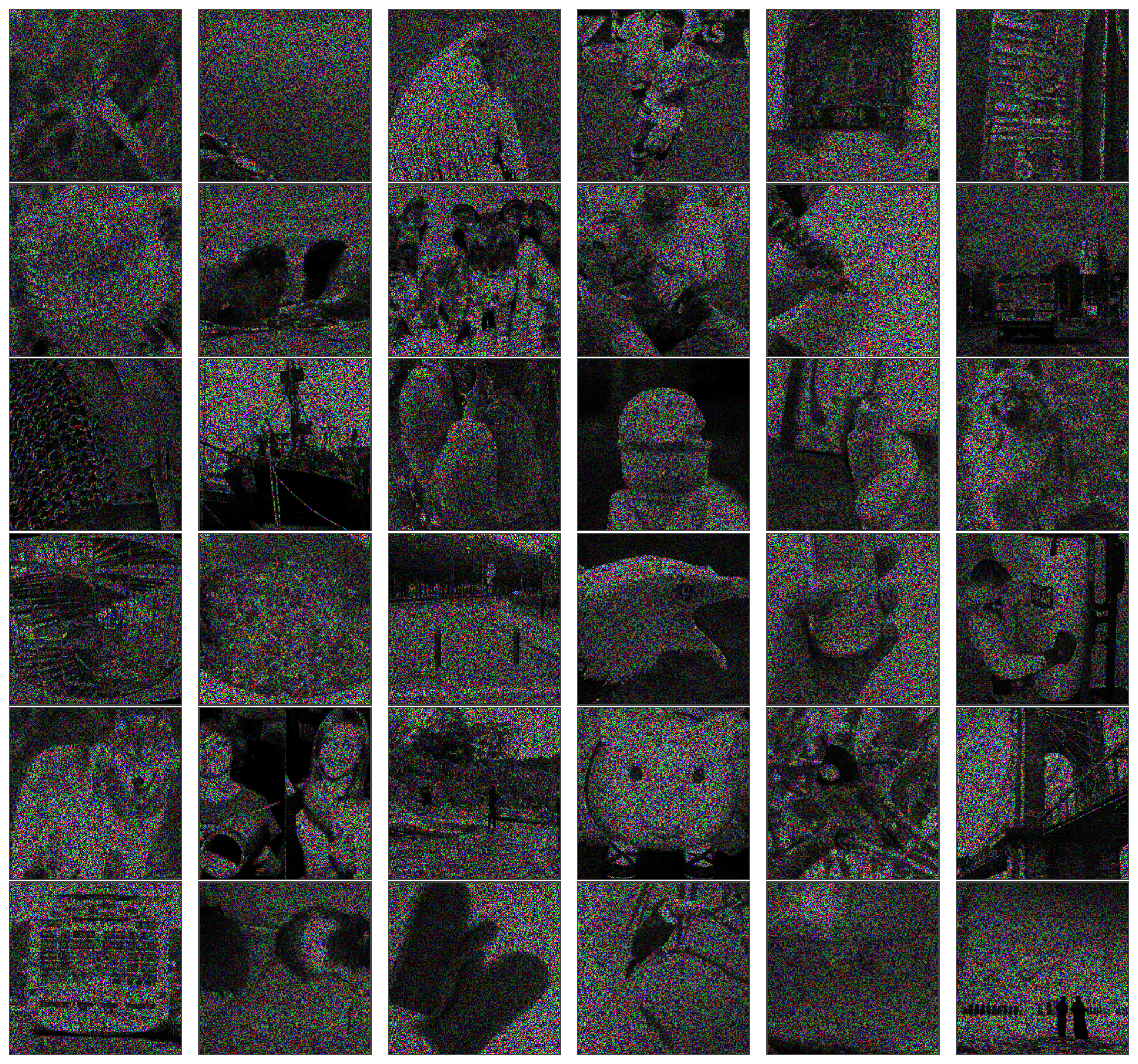}
 \caption{$\Othreed-${\bf Rotated ImageNet Samples. Seed=1}
 } 
    \label{fig:imagenet sample r1}
\end{figure}

\begin{figure}[t]
\centering
          \includegraphics[width=\columnwidth]{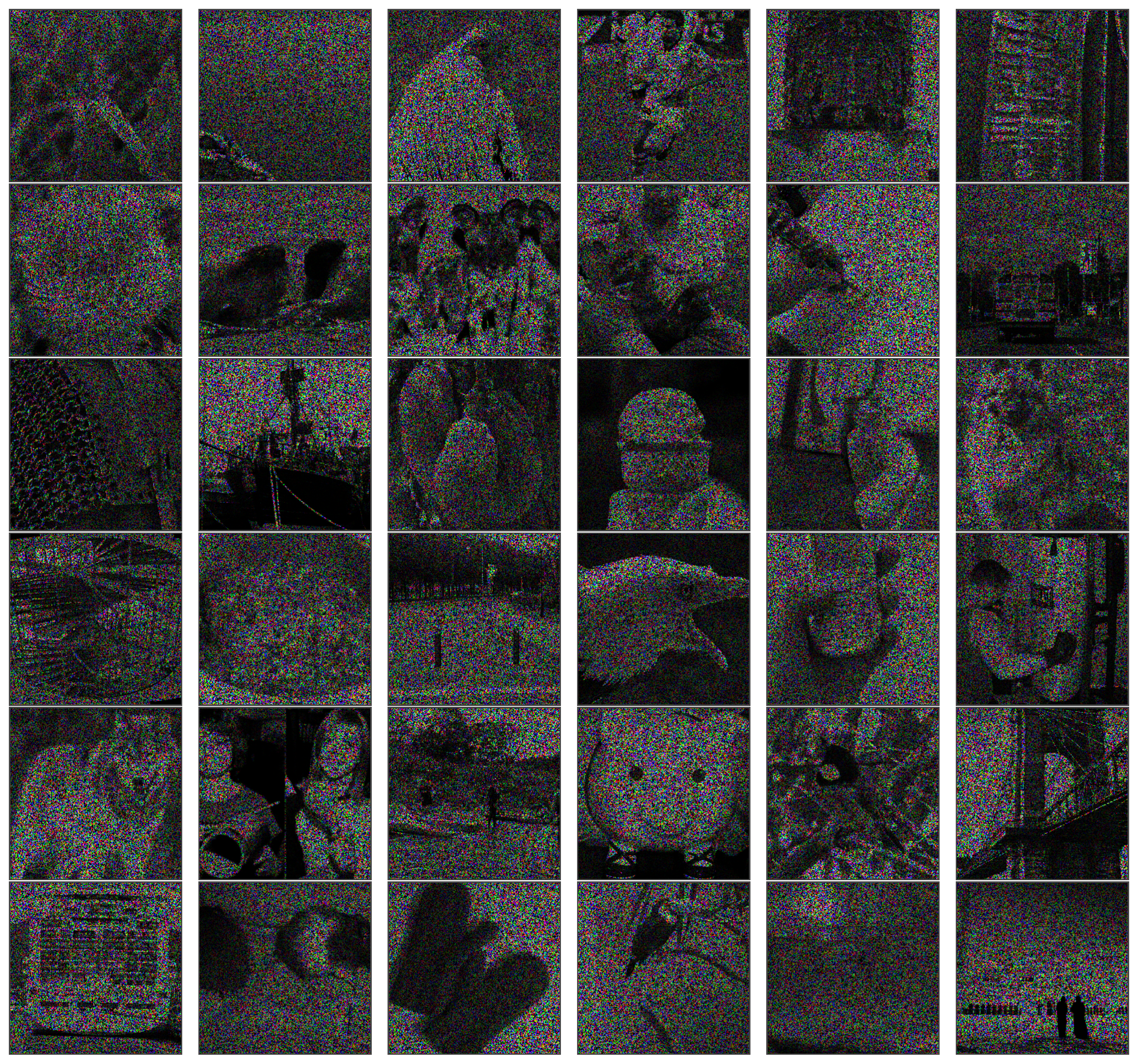}
 \caption{$\Othreed-${\bf Rotated ImageNet Samples. Seed=2} }
    \label{fig:imagenet sample r2}
\end{figure}

\begin{figure}[t]
\centering
          \includegraphics[width=\columnwidth]{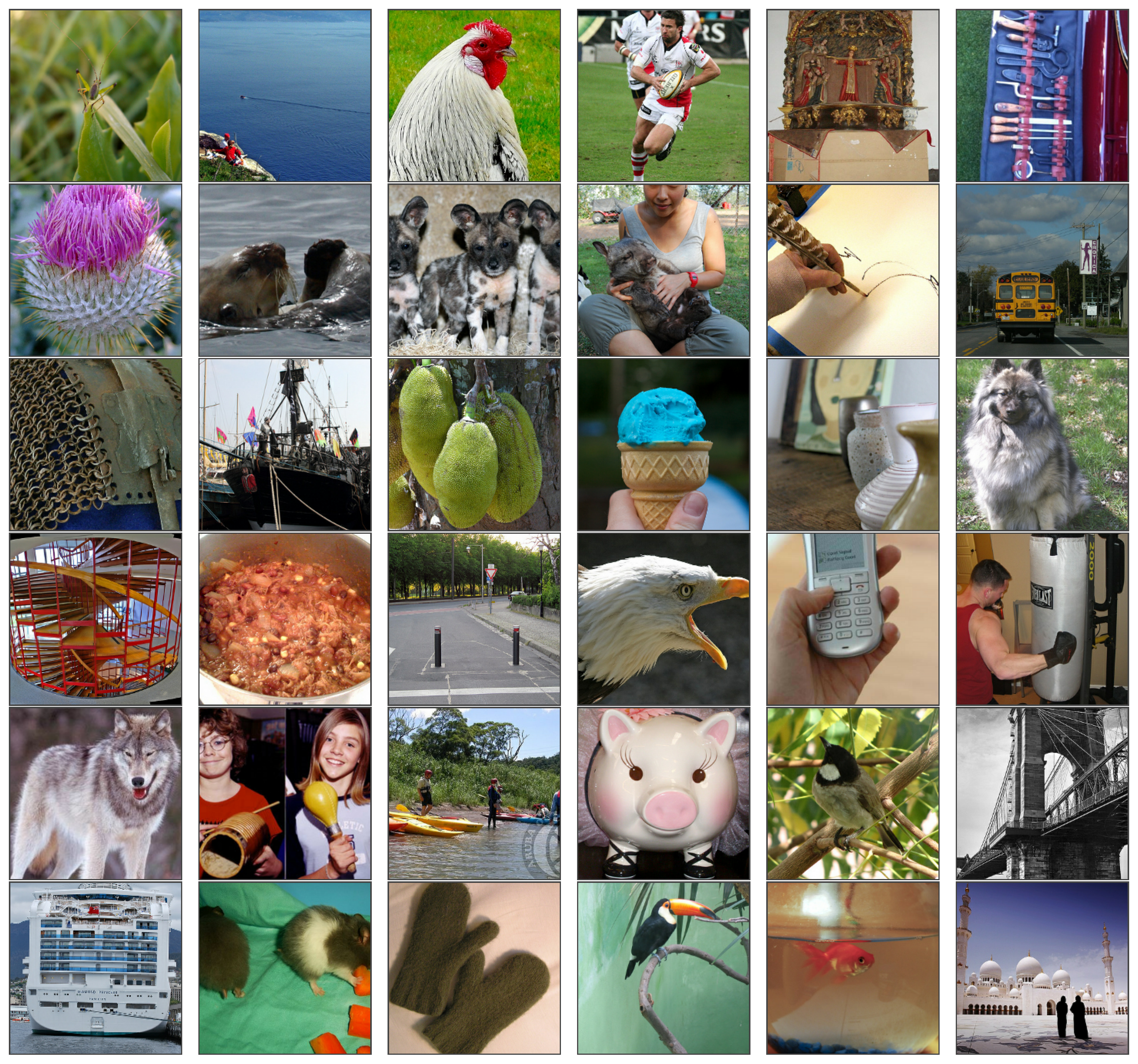}
 \caption{{\bf Clean ImageNet Samples}
 } 
    \label{fig:imagenet sample clean}
\end{figure}


\end{document}